\documentclass{article}


\usepackage[final]{neurips_2023}
\usepackage{algorithm,algorithmic}


\usepackage{amsmath,amsfonts,bm}









\def\eqref#1{equation~\ref{#1}}









\def\1{\bm{1}}

\def\eps{{\epsilon}}










\DeclareMathAlphabet{\mathsfit}{\encodingdefault}{\sfdefault}{m}{sl}
\SetMathAlphabet{\mathsfit}{bold}{\encodingdefault}{\sfdefault}{bx}{n}











\newcommand{\E}{\mathbb{E}}



\DeclareMathOperator*{\argmax}{arg\,max}

\newcounter{oldtocdepth}

\usepackage{url}




\usepackage[utf8]{inputenc} 
\usepackage[T1]{fontenc}    
\usepackage{url}            
\usepackage{booktabs}       
\usepackage{amsfonts}       
\usepackage{nicefrac}       
\usepackage{microtype}      
\usepackage{xcolor}         
\usepackage{enumitem}

\usepackage{dblfloatfix}

\usepackage{microtype}
\usepackage{graphicx}
\usepackage{amsmath,amssymb}
\usepackage[normalem]{ulem}
\usepackage{xstring}
\usepackage{mathtools, nccmath}
\usepackage{array}
\usepackage{xpatch}
\usepackage{amsthm}
\usepackage{siunitx}
\usepackage{wrapfig}
\usepackage{caption}
\usepackage{subcaption}
\graphicspath{ {images/} }
\usepackage{multirow}
\usepackage{makecell}
\usepackage[hang,flushmargin]{footmisc} 
\usepackage[normalem]{ulem}

\definecolor{highlightColor_yellow}{RGB}{255,255,153}
\definecolor{highlightColor_green}{RGB}{20,205,20}

\usepackage[pagebackref=true]{hyperref} 
\hypersetup{colorlinks,linkcolor={blue!100},citecolor={blue!90},urlcolor={black}} 
\renewcommand*\backref[1]{\ifx#1\relax \else (Cited on page #1) \fi}
\usepackage[capitalise]{cleveref}

\usepackage[font=small,labelfont=it]{caption}


\newcommand{\piotrm}[1]{\textcolor{blue}{\small [pm: #1]}}
\newcommand{\piotrmn}[1]{\textcolor{blue}{\small [pm(nice to have): #1]}}
\renewcommand{\piotrmn}[1]{}
\newcommand{\kucil}[1]{\textcolor{cyan}{\small [łk: #1]}}
\newcommand{\molko}[1]{\textcolor{orange}{\small [mo: #1]}}

\newcommand{\mosout}{\bgroup\markoverwith{\textcolor{orange}{\rule[0.5ex]{2pt}{1.2pt}}}\ULon}

\newcommand{\ola}[1]{\textcolor{magenta}{#1}}

\newcommand{\nino}[1]{\textcolor{purple}{\small \StrMid{[ns: #1]}{1}{500}}}

\definecolor{darkgreen}{rgb}{0., 0.5, 0.}
\definecolor{somecolor}{rgb}{0.6, 0.2, 0.}
\newcommand{\michal}[1]{\textcolor{somecolor}{\small [mz: #1]}}

\newcommand{\rebuttal}[1]{{\textcolor{darkgreen}{#1}}}

\renewcommand{\P}[0]{\mathbb P}
\newcommand{\Ex}{\mathbb{E}}

\newcommand{\method}{\texttt{GIT}}

\theoremstyle{plain}
\newtheorem{theorem}{Theorem}

\newtheorem{prop}[theorem]{Proposition}
\theoremstyle{definition}

\theoremstyle{remark}
\newtheorem{rem}[theorem]{Remark}

\newcommand{\funcParams}{\phi}
\newcommand{\DObs}[0]{\mathcal{D}_{obs}}
\newcommand{\DInt}[0]{\mathcal{D}_{int}}

\renewcommand{\piotrm}[1]{}
\renewcommand{\michal}[1]{}
\renewcommand{\kucil}[1]{}
\renewcommand{\molko}[1]{}
\renewcommand{\ola}[1]{#1}
\renewcommand{\nino}[1]{}

\renewcommand{\rebuttal}[1]{#1}

\makeatletter
\renewcommand*{\@fnsymbol}[1]{\ifcase#1\or*\else\@arabic{\numexpr #1 - 1 \relax}\fi}
\makeatother

\title{Trust Your $\nabla$: Gradient-based Intervention Targeting for Causal Discovery}

%

\author{%
Mateusz Olko$^{1,2,}$\thanks{These authors contributed equally. Corresponding author: \texttt{mateusz.olko@gmail.com}}
\quad Michał Zając$^{3,*}$
\quad Aleksandra Nowak$^{2,3,4,*}$ \\
\quad \textbf{Nino Scherrer}$^{5}$ 
\quad \textbf{Yashas Annadani}$^{6}$
\quad \textbf{Stefan Bauer}$^{6}$ \\
\textbf{Łukasz Kuciński}$^{2,8}$ 
\quad \textbf{Piotr Miłoś}$^{2,8,7}$ \\
$^1$Warsaw University,
$^2$IDEAS NCBR,\\
$^3$Jagiellonian Univeristy, Faculty of Mathematics and Computer Science, \\
$^4$Jagiellonian University, Doctoral School of Exact and Natural Sciences,\\
$^5$ETH Zurich, 
$^6$Helmholtz, TU Munich, 
$^7$deepsense.ai,\\
$^8$Institute of Mathematics, Polish Academy of Sciences \\
}


\begin{document}

\maketitle

\begin{abstract}
Inferring causal structure from data is a challenging task of fundamental importance in science. 
Often, observational data alone is not enough to uniquely identify a system's causal structure. 
The use of interventional data can address this issue, however, acquiring these samples typically demands a considerable investment of time and physical or financial resources.
In this work, we are concerned with the acquisition of interventional data in a targeted manner to minimize the number of required experiments. 
We propose a novel Gradient-based Intervention Targeting method, abbreviated \method{}, that 'trusts' the gradient estimator of a gradient-based causal discovery framework to provide signals for the intervention targeting function. 
We provide extensive experiments in simulated and real-world datasets and demonstrate that \method{} performs on par with competitive baselines, surpassing them in the low-data regime.
\end{abstract}


\section{Introduction}
\looseness-1 Estimating causal structure from data, commonly known as causal discovery, is central to the progress of science \citep{Pearl09}. Real-world systems can often be explained as a composition of smaller parts connected by causal relationships. Understanding this underlying structure is essential for making accurate predictions about the system's behavior after a perturbation or treatment has been applied \citep{peters2016causal}. Causal discovery methods have been successfully deployed in various fields, such as biology \citep{sachs2005causal, Triantafillou2017, glymour2019review}, medicine \citep{shen2020challenges, castro2020causality, Wu2022}, earth system science \citep{ebert2012causal}, or neuroscience \citep{sanchez2019estimating}. In machine learning, causal decomposition has been shown to enable sample-efficient learning and fast adaptation to distribution shifts by only updating a subset of parameters \citep{bengio_meta_transfer, scherrer2022generalization}.

\emph{Observational data}, that is the data obtained directly from the unperturbed  system, are, in general, insufficient to identify a system’s causal structure and only allow to determine the structure up to the so-called Markov Equivalence Class  \citep{Spirtes2000,peters2017elements}. To overcome this limited identifiability problem, causal discovery algorithms commonly leverage \emph{interventional} data \citep{hauser2012characterization, brouillard2020differentiable, ke_sdi}, which are acquired by gathering data from an experiment perturbing a part of the system  \citep{spirtes2000causation, Pearl09}. The field of \textit{experimental design} \citep{Lindley1956OnAM, murphy2001active, tong2001active} is concerned with the acquisition of interventional data in a targeted manner to minimize the number of required experiments, which often requires spending a significant amount of time and physical or financial resources.  
\begin{wrapfigure}{tr}{0.40\textwidth}
        \centering
        \includegraphics[width=\linewidth]{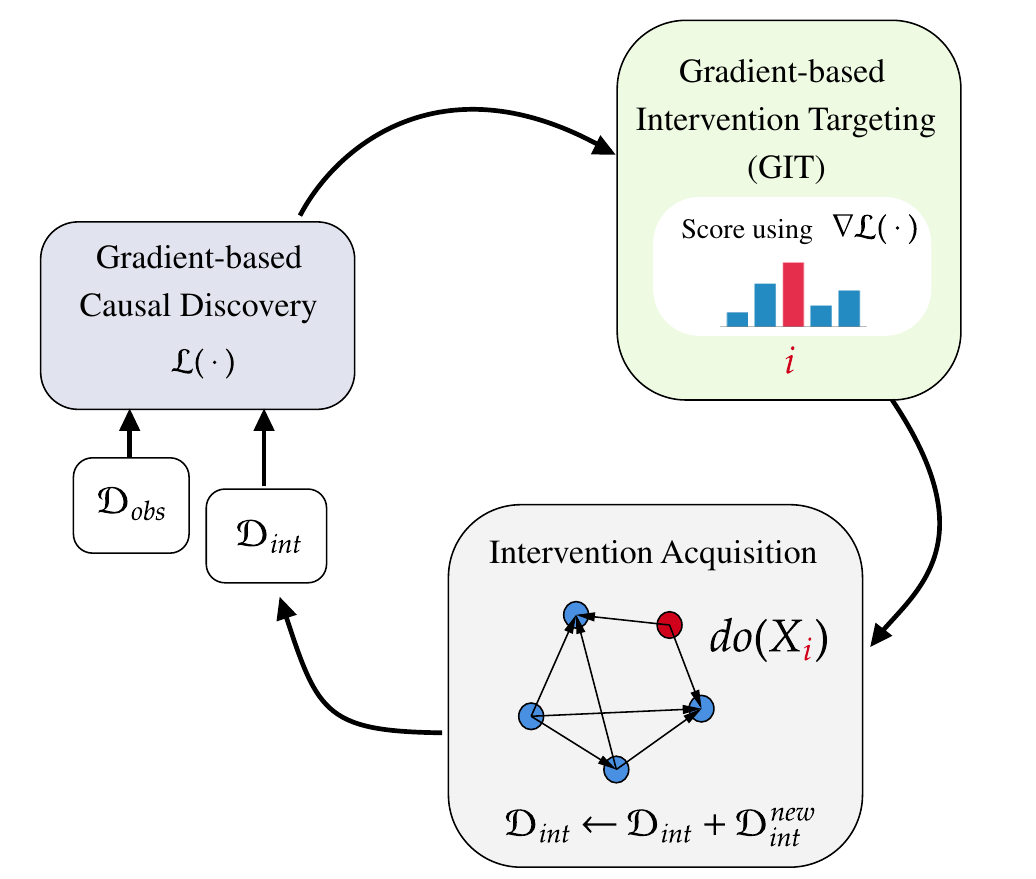}
        \caption{
        Overview of \method{}'s usage in a gradient-based causal discovery framework. The framework infers a posterior distribution over graphs from observational and interventional data (denoted as $\mathcal{D}_{obs}$ and $\mathcal{D}_{int}$) through gradient-based optimization. The distribution over graphs and the gradient estimator $\nabla \mathcal{L}(\cdot)$ are then used by \method{} in order to score the intervention targets based on the magnitude of the estimated gradients. The intervention target with the highest score is then selected, upon which the intervention is performed. New interventional data $\mathcal{D}_{int}^{new}$ are then collected and the procedure is repeated.
        }
        \label{fig:overview}
\end{wrapfigure}

In this work, we introduce a simple and effective experimental design algorithm called Gradient-based Intervention Targeting, or \method{} for short, see Figure~\ref{fig:overview}. \method{} can be easily combined with various gradient-based causal discovery frameworks to provide an efficient active selection of intervention targets. Our method, which is grounded in the ideas from active and curriculum learning \citep{Settles2007, DBLP:conf/icml/GravesBMMK17,DBLP:conf/iclr/AshZK0A20}, 
collects interventional data that induce the biggest gradient on parameters of causal structure. \method{} leverages the gradient-based nature of the underlying causal discovery framework and achieves better performance than the contemporary baselines. 

Our contributions include: 
\begin{itemize}
    \item We introduce \method{}, 
    which is to our knowledge, the first gradient-based \ola{intervention} targeting method. 
    \ola{Due to it's plug-and-play nature}, our method can be easily combined with various gradient-based causal discovery frameworks. \looseness=-1
    
    \item Our extensive experiments on synthetic and real-world graphs demonstrate that \method{} effectively reduces the amount of interventional data needed to discover the causal structure, and performs well in the low-data regime. This makes \method{} a compelling option when access to interventional data is limited. 
    
    \item We provide a theoretical justification of \method{} and a suite of analyses introspecting its behavior and performance.
\end{itemize}

\vspace{-0.2cm}
\section{Related Work} \label{sec:related_work}

\textbf{Experimental Design / Intervention Design.} 
There are two major classes of methods for selecting optimal interventions for causal discovery.
One class of approaches is based on graph-theoretical properties. Typically, a completed partially directed acyclic graph (CPDAG), describing an equivalence class of DAGs, is first specified. Then, either substructures, such as cliques or trees, are investigated and used to inform decisions \citep{JMLR:v9:he08a,eberhardt2012almost,Squires20,NEURIPS2019_5ee56059}, or edges of a proposed graph are iteratively refined until reaching a prescribed budget \citep{pmlr-v80-ghassami18a,ghassami2019,NIPS2017_291d43c6,Lindgren2018}. \ola{One limitation} of graph-theoretical approaches is that misspecification of the CPDAG at the beginning of the process can deteriorate the final solution. 
Another class of methods is based on Bayesian Optimal Experiment Design \citep{Lindley1956OnAM}, which aims to select interventions with the highest mutual information (MI) between the observations and model parameters. MI is approximated in different ways: AIT \citep{scherrer2021learning} uses F-score inspired metric to implicitly approximate MI; CBED \citep{tigas2022} incorporates BALD-like estimator \citep{houlsby2011}; ABCD \citep{DBLP:conf/aistats/AgrawalSYSU19} uses estimator based on weighted importance sampling. Although theoretically principled, computing mutual information suffers from approximation errors and model mismatches. Therefore, in this work, we explore using scores based on different principles.

\textbf{Gradient-based Causal Structure Learning.} The appealing properties of neural networks have sparked a flurry of gradient-based causal structure learning methods. The most prevalent approaches are self-supervised formulations that optimize a data-dependent scoring metric (for instance, penalized log-likelihood) to find the best causal graph $G$. Existing self-supervised methods that are capable (or can be extended) of incorporating interventional data can be categorized based on the underlying optimization formulation into: (i) frameworks with a joint optimization objective \citep{brouillard2020differentiable, lorch2021dibs, cundy2021bcd, annadani2021variational, geffner2022deep, deleu2022bayesian} and (ii) frameworks with alternating phases of optimization \citep{bengio_meta_transfer, ke_sdi, lippe2021efficient}. While structural and functional parameters are optimized under a joint objective in the former, the latter splits the optimization into two phases with separate objectives. All the aforementioned methods allow evaluation of gradient with respect to the structural and functional parameters with a batch of (real or hypothesized) interventional samples and can serve as a base framework for our proposed \emph{gradient-based} intervention acquisition strategy. \looseness=-1

\textbf{Gradients in Active and Curriculum Learning.} Gradients have been successfully used as a criterion to select data to process in previous work. \cite{Settles2007} introduce Expected Gradient Length (EGL), computed under the current belief, as a criterion for active learning. A batch active learning method introduced in \cite{DBLP:conf/iclr/AshZK0A20} also targets data points with high gradient magnitude, including uncertainty and diversity in the decision. In the area of curriculum learning, \cite{DBLP:conf/icml/GravesBMMK17} considers Gradient Prediction Gain (GPG), which is defined as the gradient's magnitude and is meant to be a proxy for expected learning progress. We take inspiration from those approaches to propose a novel usage of the gradient criterion in the field of causal discovery. \looseness=-1

\vspace{-0.2cm}
\section{Preliminaries}

\vspace{-0.2cm}
\subsection{Structural Causal Models and Causal Structure Discovery } \label{sec:preliminaries}

 Causal relationships can be formalized using structural causal models (SCM)~\citep{peters2017elements}. Each of the endogenous variables $X=(X_1,\dots, X_n)$  is expressed as a function $X_i = f_i(PA_i, U_i)$ of its direct causes $PA_i \subseteq X$ and an external independent noise $U_i$. It is assumed that the assignments are acyclic and thus associated with a directed acyclic graph $G=(V,E)$. The nodes $V=\{1,\dots,n\}$ represent the random variables and the edges correspond to the direct causes, that is   $(i,j) \in E$ if and only if $X_i \in PA_j$. The joint distribution factorizes according to 

 \begin{equation}
    \P(X_1, \dots, X_n)=\prod_{i=1}^{n}\P(X_i|PA_i).
    \label{eq:markov}
\end{equation}

Causal structure discovery aims to recover the ground truth graph $G$. 
The solution to this problem is not uniquely defined when having access only to observational data from the ground truth distribution $\P$. Formally, it can be determined solely up to a Markov Equivalence Class~(MEC)~\citep{spirtes2000causation, peters2017elements} without additional restrictive assumptions. To achieve identifiability, data from additional experiments, called interventions, need to be gathered.

A single-node intervention on $X_i$ replaces the conditional distribution  $\P(X_i|PA_i)$ with a new distribution denoted as $\smash{\widetilde{\P}(X_i|PA_i)}$, yielding a so-called interventional distribution:
\begin{equation}  
\P_{i}(X)\triangleq\widetilde{\P}(X_i|PA_i)\prod_{j\ne i}\P(X_j|PA_{j}).
\end{equation}

The node $i \in V$ is called the \emph{intervention target}. An intervention that removes the dependency of a variable $X_i$ on its parents, yielding $\smash{\widetilde{\P}(X_i|PA_i)=\widetilde{\P}(X_i)}$, is called hard. In this paper, we use data gathered by performing single-node interventions. 

\subsection{Online Causal Discovery and Targeting Methods} \label{sec:online_causal_discovery}

\begin{wrapfigure}{L}{0.53\textwidth}
\vspace{-0.2in}
\begin{minipage}{0.53\textwidth}
\begin{algorithm}[H]
\caption{\textsc{Online Causal Discovery} }
\label{alg:online_causal_discovery}
\begin{algorithmic}[1]
    \INPUT{causal discovery algorithm $\mathcal{A}$ (e.g., ENCO, see Sec \ref{sec:gitWithENCO}), intervention targeting \rebuttal{method $\mathcal{M}$},
    number of data acquisition rounds $T$, observational dataset $\DObs$}
    \OUTPUT{final parameters of graph model: $\varphi_T$}
    \STATE $\DInt \gets \varnothing$
    \STATE Fit graph model $\varphi_0$ with algorithm  $\mathcal{A}$ on $\DObs$ 
    \FOR{round $i = 1,2,\ldots,T$}
      \STATE $I \gets \text{ generate intervention targets using } \rebuttal{\mathcal{M}}$
      \STATE $D_{int}^I \leftarrow \text{ query for data from interventions } I$
      \STATE $\DInt \leftarrow \DInt \cup D_{int}^I$
      \STATE Fit $\varphi_i$ with algorithm  $\mathcal{A}$ \text{ on } $\DInt \text{ and } \DObs$ 
    \ENDFOR
\end{algorithmic}
\end{algorithm}
\end{minipage}
\vspace{-0.2in}
\end{wrapfigure}

In this work, we consider an \textit{online} causal discovery procedure outlined in Algorithm~\ref{alg:online_causal_discovery}. \ola{Given a causal discovery Algorithm $\mathcal{A}$}, the graph model $\varphi_0$ is fitted using observational data $\mathcal{D}_{obs}$. Following that, batches of interventional samples are acquired iteratively and \ola{ are used by the algorithm} to improve the belief about the causal structure (line 7). Intervention targets are chosen by \textit{intervention targetting method} $\mathcal{M}$ to optimize the overall performance, taking into account the current belief about the graph structure encoded in $\varphi_{i-1}$. Below we discuss two popular choices for the method $\mathcal{M}$ (with more details deferred to Appendix \ref{app:acquisition}). \looseness=-1

\paragraph{Active Intervention Targeting (AIT)} AIT selects the intervention target according to an $F$-test inspired criterion~\citep{scherrer2021learning}. It assumes that the causal discovery algorithm $\mathcal{A}$ maintains a posterior distribution over graphs (by design or using bootstrapping). To select an intervention target, a set of graphs is sampled from the posterior distribution, and interventional sample distributions are generated by intervening on each of the sampled graphs. Each potential intervention target is assigned a score by measuring the discrepancy across the corresponding interventional sample distributions. 

\paragraph{CBED targeting} Another approach to causal discovery is approximating the posterior distribution over the possible causal DAGs. This allows using the framework of Bayesian Optimal Experimental Design  to select the most informative intervention (experiment). The score of a new experiment is given by the mutual information (MI) between the interventional data due to the experiment and the current belief about the graph structure. Hence, such an approach requires estimating MI. For instance, Causal Bayesian Experimental Design (CBED)~\citep{tigas2022} uses a BALD-like estimator~\citep{houlsby2011} to sample batches of interventional targets.

\section{\method{} method}\label{sec:method}
\newcommand{\strParams}[0]{\rho}
\renewcommand{\i}[0]{{i}}

In this work, we present a new \emph{intervention targeting} method \method{}. \method{} chooses intervention targets that induce the largest update of the parameters modeling the causal structure. 
Inspired by \textit{hallucinated gradients} exploited by \citep{DBLP:conf/iclr/AshZK0A20} we calculate gradients on imaginary data generated by the causal model, 
to score possible interventions for real data acquisition.

To formally introduce our method, we first describe the requirements that need to be fulfilled by a causal algorithm $\mathcal{A}$ in order to use it with \method{}. We then explain how \method{} works and follow up with a discussion about causal assumptions and theoretical justification of our approach. Finally, in Section \ref{sec:gitWithENCO}, we present a practical implementation of our method with a causal discovery algorithm $\mathcal{A}$, using a popular ENCO algorithm as an example.

\paragraph{Requirements for causal discovery algorithm $\mathcal{A}$.} The intervention targeting method \method{} can be coupled with any gradient-based causal discovery algorithm $\mathcal{A}$ (see Algorithm \ref{alg:online_causal_discovery}) that fulfills the following conditions: 
\begin{enumerate}[nosep]
    \item $\mathcal{A}$ models a distribution \kucil{1205: "an algorithm models a distribution" sounds weird} over the causal DAGs, denoted by a family of probability measures $\mathbb{P}_{\strParams}(G)$ parameterized by $\strParams$, that allows sampling. 

    \item For each causal graph $G$, $\mathcal A$ maintains a corresponding family of conditional distributions,
    $\P_{G,\funcParams} (X_i|PA_{(i, G)})$, parametrized by $\funcParams$, which induces the joint distribution~$\P_{G,\funcParams}$:
    \begin{equation}\label{eq:gen_dist}
    \P_{G,\funcParams}(X) \triangleq \prod_i \P_{G, \funcParams}\left(X_i|PA_{(i,G)}\right).
    \end{equation}
    If $G$ corresponds to the ground truth graph, $\P_{G,\phi}$ approximates the ground truth distribution over $X$.
    \item $\mathcal A$ gives access to its loss function $\mathcal L$ and gradient of the loss function $\nabla_\strParams \mathcal L$ with respect to $\strParams$. 
\end{enumerate}

\kucil{1205: maybe we should use the names for $\rho$, $\phi$? (structural/...)}

These requirements are mildly restrictive and they are fulfilled by many gradient-based discovery methods (for instance, ENCO~\citep{lippe2021efficient}, SDI~\citep{ke_sdi}, DiBS~\citep{lorch2021dibs}, DCDI~\citep{brouillard2020differentiable} or DECI~\citep{geffner2022deep}).

\paragraph{Method.}\method{} scores each possible intervention target by calculating the expected magnitude of the gradient using imaginary interventional data generated by the causal model. Gradient magnitude serves as a proxy for the size of the update that can be induced on the parameters of the causal model. The method picks intervention that has the highest score. Formally, for a given intervention $i\in V$ we define its score $s_i$ as follows: \looseness=-1

\begin{equation}
\begin{split}\label{eq:score}
  s_i &\triangleq \mathbb{E}_{X \sim \P_{\strParams, \funcParams, \i}} \Vert\nabla_{\strParams}  \mathcal{L}(X)\Vert.
\end{split}
\end{equation}
Note that the expected value is computed with the interventional distribution coming from the model, instead of ground truth, defined as:
\begin{equation}
\begin{split}\label{eq:score_2}
  \P_{\strParams,\funcParams,\i}(X) &\triangleq \sum_{G} \P_{\strParams}(G)  \P_{G, \funcParams, \i}(X). 
\end{split}
\end{equation}
The summation in \eqref{eq:score_2} is taken over all DAGs and $\P_{G,\funcParams, i}$ corresponds to the joint distribution from the model for graph $G$:
\begin{equation}\label{eq:joint_distribution}
\P_{G,\funcParams,i}(X)\triangleq\widetilde{\P}(X_i|PA_i)\prod_{j\ne i}\P_{G,\funcParams}(X_j|PA_{(j, G)}).
\end{equation}

The computational procedure of \method{}'s intervention target selection is listed in Algorithm \ref{alg:ITS}. 
The expected value in $s_i$ is approximated using the Monte-Carlo method, see line 4 of Algorithm \ref{alg:ITS:selection}. 
We also use a version of Algorithm \ref{alg:ITS} 
where real interventional data are used in line 3 (instead of the imaginary ones from the model) and call it \method{}-privileged. 
\method{}-privileged serves as a soft upper bound in our analysis.

\begin{wrapfigure}{L}{0.57\textwidth}
\vspace{-0.3in}
\begin{minipage}{\linewidth}
\begin{algorithm}[H]
\caption{\textsc{\method{}'s Intervention target selection}}
\label{alg:ITS}
\begin{algorithmic}[1]
    \INPUT{parameters $\strParams$ of distribution over graphs, functional parameters $\funcParams$, loss function $\mathcal{L}$, graph nodes $V$}
    \OUTPUT{batch of intervention targets to execute: $I$} 
    \STATE $\mathcal{G} \gets$ sample a set of DAGs according to \rebuttal{$\P_\strParams(G)$}  \label{alg:ITS:sample_dags}
    \FOR{intervention target $i \in V$}
    \STATE \rebuttal{$\mathcal D_{G, \i} \gets \text{ sample batch of data according to } \P_{G,\funcParams,\i}$}
    \STATE  $s_i\gets \frac{1}{|\mathcal G|}\sum_{G\in\mathcal G} \rebuttal{\frac{1}{|\mathcal D_{G, \i}|} \sum_{X\in\mathcal D_{G, \i}}} \lVert\nabla_\strParams \mathcal L(\rebuttal{X})\rVert$  \label{alg:ITS:MCS}
    \ENDFOR
    \STATE $I \gets \text{select a batch of targets with highest scores } s_i$ \label{alg:ITS:selection}
\end{algorithmic}
\end{algorithm}
\end{minipage}
\vspace{-0.2in}
\end{wrapfigure}

\paragraph{Assumptions of \method{}.} From the causal perspective, \method{} relies exclusively on the Markov property assumption, which allows factorization of joined distribution (see Equation~\ref{eq:markov}). 
However, \method{} as a plug-and-play extension for causal discovery algorithms $\mathcal{A}$ inherits their assumptions.
This may include, for instance, causal sufficiency or faithfulness. 
Our method does not require any additional assumptions on the variables $X_i$ and allows for both discrete and continuous setups.

\paragraph{Theoretical justification of \method{}.} We show the convergence of \method{} in two contexts. First, we prove that the main setup of this paper, i.e., \method{} with ENCO \citep{lippe2021efficient}, described in Section \ref{sec:gitWithENCO}, converges. The detailed result can be found in Appendix \ref{app:git_proof}, but the gist of the argument is that vertices for which the model structure is not aligned with the ground truth will have non-trivial gradients and hence will be queried by \method{}, allowing the model to improve. 
Moreover, we show empirically that \method{} gradients are well correlated with the principled GPG signal of \method{}-privileged, see Appendix \ref{app:corrs}. Second, we show that given any convergent causal discovery algorithm, \method{} converges if we allow a uniform sampling of intervention with small probability $\epsilon >0$, see Appendix \ref{app:greedy}. We call this approach $\epsilon$-greedy \method{}.
Importantly, on a finite sample with small enough $\epsilon$, \method{} and $\epsilon$-greedy \method{} are statistically indistinguishable. \looseness=-1

\subsection{Applicability to ENCO} \label{sec:gitWithENCO}

We choose to use ENCO as the gradient-based causal discovery framework $\mathcal{A}$ in our main experiments (recall Algorithm~\ref{alg:online_causal_discovery}) due to its strong empirical results and good computational performance on GPUs. 
ENCO  maintains a parameterized distribution over graph structures, with the so-called structural parameters $\{\strParams_{i,j}\}_{i,j}$ representing the adjacency matrix and a set of parameters modeling the functional dependencies, $\funcParams$. The structural parameters, $\strParams_{i,j}$, are factorized into an edge existence parameter, $\gamma_{i,j}$, and an edge orientation parameter, $\theta_{i,j} = -\theta_{j,i}$.

The parameters are updated by iteratively alternating between two optimization phases.
The goal of the first phase is to learn functions   $f_{\phi_i}\left(x_i|PA_{(i,G)}\right)$,  which model the conditional density of $\P\left(X_i|PA_{(i, G)}\right)$.  
The training objective is the log-likelihood loss. 
The second phase aims to update the parametrized edge probabilities $\rho_{i,j}$'s. To this end, ENCO collects a data sample from a mixture of interventional distributions denoted by $\P_I$ 
. The graph parameters are optimized by minimizing $\Ex_{X\sim \P_I} L_{\text{graph}}(X) $ where: \looseness=-1
\begin{equation}
   \rebuttal{L_{\text{graph}}(X)} \rebuttal{\triangleq \Ex_{G \sim P_{\gamma, \theta}} \bigg[\sum_{i=1}^{n}L_G(X_i)\bigg]}, \quad
    L_G(x_i) \triangleq -\log{f_{\phi_i}\left(x_i|PA_{(i,G)}\right)}, \label{eq:enco_obj}
\end{equation}
For a detailed description of the method, distributions, and the estimators see Appendix~\ref{app:enco-details}.

\kucil{1205: Not clear how does the following paragraph matches \eqref{eq:enco_obj} and the corresponding discussion}
\paragraph{\method{} with ENCO details.} 
The loss function $\mathcal{L}$ utilized by \method{} is denoted $L_{\text{graph}}$. We incorporate information from both structural parameters and use 
$\lVert \nabla_\gamma L_{\text{graph}}(X)\rVert^2 + \lVert \nabla_\theta L_{\text{graph}}(X)\rVert^2$
to compute the score for the intervention $i$ in line 4 of Algorithm \ref{alg:ITS}.
In order to sample DAGs from the current graph distribution (line 1 of Algorithm \ref{alg:ITS}), we use a two-phase sampling procedure proposed in \cite[Section 3.2]{scherrer2021learning} as it is scalable and guarantees DAG-ness by construction opposed to Gibbs sampling or rejection sampling techniques. \looseness=-1 

\section{Experiments} \label{sec:experiments}

We compare \method{} against the following baselines: AIT, CBED, Random, and  \method{}-privileged. AIT and CBED are competitive intervention acquisition methods for gradient-based causal discovery (which we discussed in Section~\ref{sec:online_causal_discovery}). The Random method selects interventions uniformly in a round-robin fashion\footnote{At every step, a target node is chosen uniformly at random from the set of yet not visited nodes. After every node has been selected, the visitation counts are reset to 0.}. The last approach, \method{}-privileged, is the oracle method described in Section~\ref{sec:method}. 

Our main result is that \method{} brings substantial improvement in the low data regime, being the best among benchmarked methods for all considered synthetic graph classes and half of the considered real graphs in terms of the 
AUSHD metric (see Equation~\ref{eq:aushd}). On the remaining real graphs, our approach performs similarly to the baseline methods. 
Notably, in most cases, \method{} surpasses MI-based approaches: CBED and AIT. We present the summary in Table~\ref{tab:better_or_comparable_shd_aushd}. This result is accompanied by an in-depth analysis of the relationships between different strategies and the distributions of the selected intervention targets. Additional results in the DiBS framework \citep{lorch2021dibs} with continuous data are presented in Appendix~\ref{sec:dibs_results}.

\begin{table}[tbp]
    \caption{We count the number of setups (24), where a given method was best or comparable to the other methods (AIT, CBED, Random, and \method{}; \method{}-privileged was not compared against), based on 90\% confidence intervals for SHD and AUSHD.  Each entry shows the total count, broken down into two data regimes, $N=1056$ and $N=3200$, respectively, presented in parentheses.} 
    \centering
    \begin{tabular}{lllll||l}
    \toprule
    {} &        AIT &       CBED &   Random &      \method{} (ours)&     \method{}-privileged\\
    \midrule
    mean AUSHD  &  6 {\small (2 + 4)} &  6 {\small (4 + 2)} &  12 {\small (5 + 7)} &  18 {\small (11 + 7)} &  24 {\small (12 + 12)} \\
    mean SHD  &  10 {\small (4 + 6)} &  7 {\small (4 + 3)} &  22 {\small (12 + 10)} &  17 {\small (10 + 7)} &  24 {\small (12 + 12)} \\
    \bottomrule
    \end{tabular}

    \label{tab:better_or_comparable_shd_aushd}
\vspace{-0.1in}
\end{table}

\subsection{Experimental Setup}
\label{sec:exp_setup}

We evaluate the different intervention targeting methods in online causal discovery, see Algorithm~\ref{alg:online_causal_discovery}. We utilize an observational dataset of size $5000$. We use $T=100$ rounds, in each one acquiring an interventional batch of $32$ samples.
We distinguish two regimes: regular, with all $100$ rounds ($N=3200$ interventional samples), and low, with $33$ rounds ($N=1056$ interventional samples). We use \rebuttal{$|\mathcal G|=50$} graphs \rebuttal{and $|\mathcal D_{G, i}| = 128$ data samples from each graph} for the Monte-Carlo approximation of the GIT score. 
We tested different sizes of the Monte-Carlo sample and found that it does not have a major impact on performance, see Appendix~\ref{app:mc_sampling}.
For all experiments in this section we assume, following the approach of \cite{lippe2021efficient}, that all interventions are single-node, hard, and change the conditional distribution of the intervened node to uniform.
\paragraph{Datasets}
We use synthetic and real-world datasets. 
The synthetic dataset consists of \texttt{bidiag}, \texttt{chain}, \texttt{collider}, \texttt{jungle}, \texttt{fulldag} and \texttt{random} DAGs, each with $25$ nodes. The variable distributions are categorical, with $10$ categories\footnote{We create the datasets using the code provided by \cite{lippe2021efficient}. See Appendix~\ref{app:synth_graphs_details} for details.}. The real-world dataset consists of \texttt{alarm}, \texttt{asia}, \texttt{cancer}, \texttt{child}, \texttt{earthquake}, and \texttt{sachs} graphs, taken from the  BnLearn repository \citep{bnlearn}. Both synthetic and real-world graphs are commonly used as benchmarking datasets ~\citep{ke_sdi,lippe2021efficient, scherrer2021learning}.

\paragraph{Metrics}
We use the Structural Hamming Distance (SHD) \citep{Tsamardinos2006} between the predicted 
and the ground truth graph as the main metric. SHD between two directed graphs is defined as the number of edges that need to be added, removed, or reversed in order to transform one graph into the other. 
More precisely, for two DAGs represented as adjacency matrices $c$ and $c'$,
\begin{equation}\label{eq:shd}
\text{SHD}(c, c') := \sum_{i > j}\mathbf 1(c_{ij}+c_{ji} \ne c_{ij}' + c_{ji}' \text{ or } c_{ij}\ne c_{ij}').
\end{equation}

In the experiments, we always compute SHD between the predicted and the ground truth graph.
In order to aggregate SHD values over different data regimes, we introduce the area under the SHD curve (AUSHD):\looseness=-1
\begin{equation}
\text{AUSHD}_{m}^T := \frac{1}{T} \sum_{t=1}^T  \text{SHD}_{m}^t, \quad \text{SHD}_{m}^t := \text{SHD}(c_{gt}, c_{m,t}) \label{eq:aushd}
\end{equation}
where $m$ is the used method, $T$ is the number of interventional data batches, $c_{gt}$ is the ground truth graph, and $c_{m,t}$ is the graph fitted by the method $m$ using $t$ interventional data batches. Intuitively, for small to moderate values of $T$, AUSHD captures a method's speed of convergence: the faster the SHD converges to $0$, the smaller the area.  For large values of $T$, AUSHD measures the asymptotic convergence. 
Smaller values indicate a better method. 
\rebuttal{For visualizations, we use surplus of AUSHD over Random method (SAUSHD), which compares method $m$ the the Random baseline.} Precisely,
\begin{equation}\label{eq:eaushd}
\rebuttal{\text{SAUSHD}^T_{m} := \text{AUSHD}^T_{m} - 
\mathbb{E}\left[\text{AUSHD}^T_{Random} \right],}
\end{equation}

where the expectation averages all randomness sources (e.g. stemming from the initialization). 
Again, smaller values indicate a better method.

\subsection{Main Result: \method{}'s Empirical Performance}
\label{sec:exp_main}

\begin{figure}[tbp]
     \centering
     \includegraphics[width=0.85\textwidth]{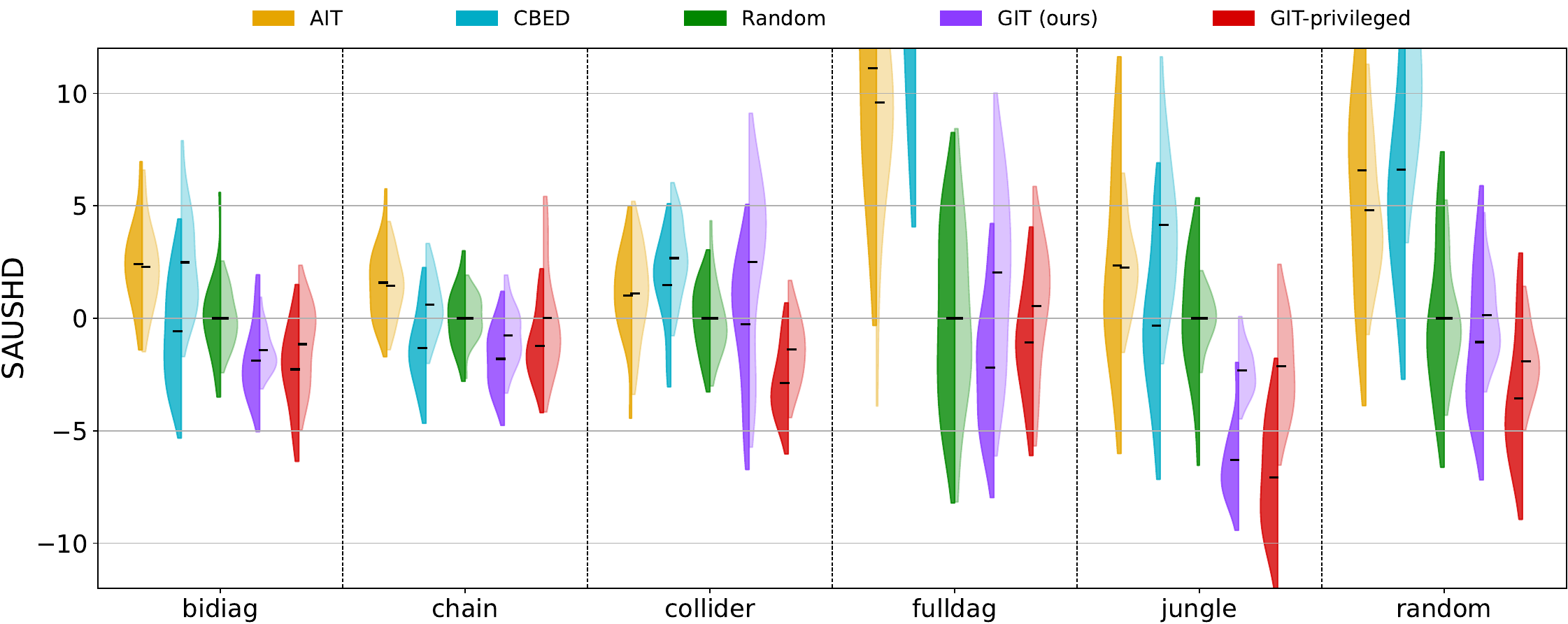}
    \caption{The distribution of SAUSHD   (see \eqref{eq:eaushd}), calculated using 25 seeds,
    for synthetic graphs (lower is better).
    The intense color (left-hand side of each violin plot) indicates the low data regime ($N=1056$ samples). 
    The faded color (right-hand side of each violin plot) represents a higher amount of data ($N=3200$ samples).
    Note that even though the solution quality is improved when more samples are available, typically, SAUSHD is smaller in the low data regime. This is because it measures relative improvement over the random baseline, which is most visible for the small number of samples in most methods.}
    \label{fig:enco_synth}
    \vspace{-0.3cm}
\end{figure}

\paragraph{\method{}'s Overall Strong Performance}
We evaluate \method{} on 24 training setups: 
twelve graphs (synthetic and real-world, six in each category) and two data regimes.
\method{} is the best or comparable to the baseline methods (excluding \method{}-privileged) 
 in 18 cases according to mean AUSHD, and 17 cases according to mean SHD, see Table \ref{tab:better_or_comparable_shd_aushd}. 
Additionally, \method{} is stable, 
as the distribution of its AUSHD has most frequently the smallest variation among non-privileged methods ($11$ out of $24$ cases), see Table~\ref{app:tab:aushd_synthetic} and Table~\ref{app:tab:aushd_real_world} in Appendix~\ref{app:aushd-tables}.
In terms of pairwise comparison with other methods, \method{} is better in 45 cases and comparable in 35 cases, out of a total of 96 ($ = 24$ setups $\times 4$ other methods), see Table~\ref{tab:app:pairwise_better_or_equal} in Appendix \ref{app:ranking}. 
Interestingly, \method{}'s performance for graphs with fewer nodes (\texttt{cancer},  \texttt{earthquake}) is less impressive. 
We hypothesize that this is because in these cases, the corresponding Markov Equivalence Class  is a singleton (see Figure~\ref{fig:real_dist}). Consequently, they require less interventional data to converge (see training curves in Appendix~\ref{app:enco_curves}), which  diminishes the impact of different intervention strategies.

\paragraph{\method{} is Especially Efficient for Low Data}
\begin{figure}[tbp]
     \centering
     \includegraphics[width=0.85\textwidth]{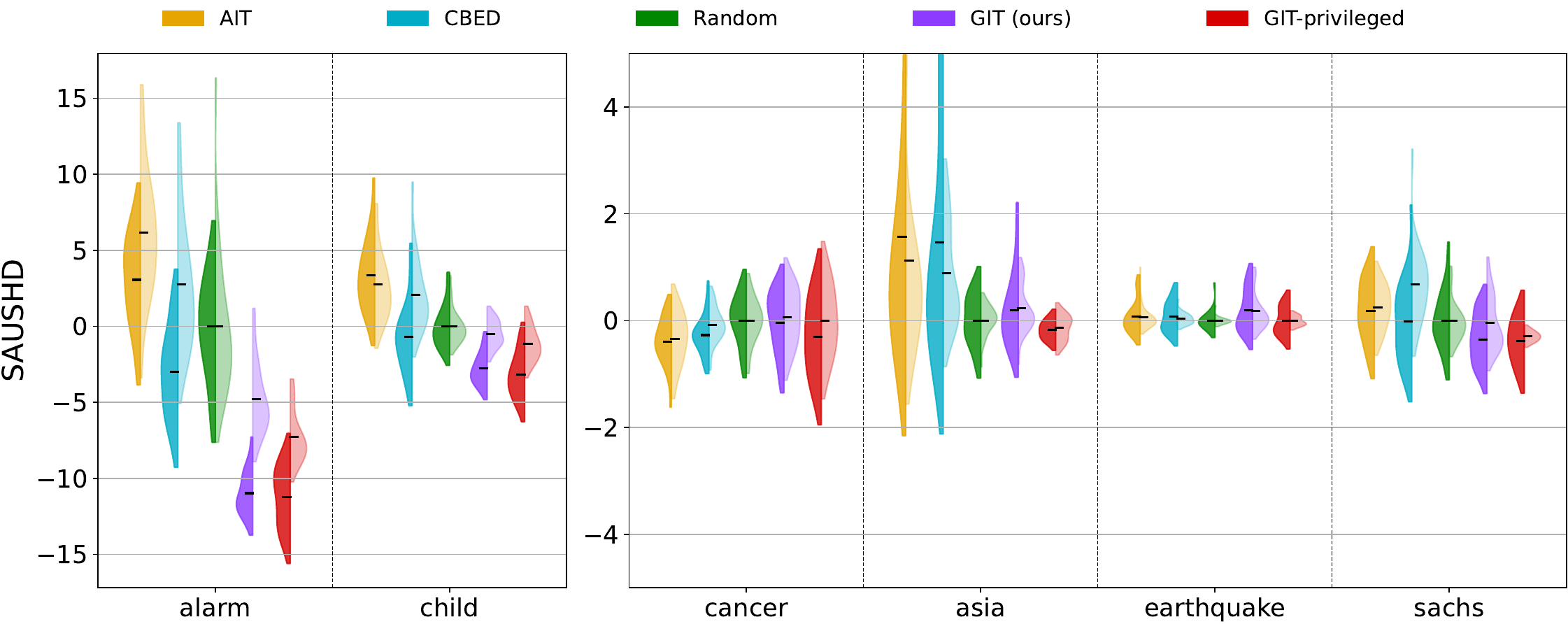}
    \caption{The distribution of SAUSHD 
    (see \eqref{eq:eaushd}), calculated using 25 seeds,
    for real-world graphs (lower is better).
    The intense color (left-hand side of each violin plot) indicates the low data regime ($N=1056$ samples). 
    The faded color (right-hand side of each violin plot) represents a higher amount of data ($N=3200$ samples). Notice that the two plots have different scales. 
    } 
    \label{fig:enco_real}
    \vspace{-0.35cm}
\end{figure}

In the low data regime ($N=1056$), \method{} is better or  

comparable to all the other non-privileged methods for 11 out of 12 graphs, see Table~\ref{tab:better_or_comparable_shd_aushd}. Pictorially, this phenomenon can be seen in Figure \ref{fig:enco_synth} and Figure \ref{fig:enco_real}, where the left-hand side of the \method{} violin plot tends to display the most favorable behavior compared to AIT, CBED, and Random methods. 
This suggests that \method{} could be a good choice when access to interventional data is limited or costly.

\paragraph{\method{} Outperforms MI-based Approaches} We also notice that the performance of MI-based approaches (CBED and AIT) is worse than the one of \method{}, typically attaining significantly worse AUSHD (see Figure~\ref{fig:enco_synth} and Figure~\ref{fig:enco_real}) and SHD values (see Figure~\ref{fig:enco_shd_synth:apx} and Figure~\ref{fig:enco_shd_real:apx} in the Appendix~\ref{app:enco_tables}).
This problem is further corroborated in Section~\ref{sec:large-sample}, where we show that even in the case of large interventional batch size, these methods occasionally underperform Random, unlike \method{}, which clearly wins in such a scenario. We hypothesize the poor performance comes from approximation errors and model mismatches, subverting the MI criterion which should lead to near-optimal decisions in the case of exact mutual information computation~\cite{krause,Nemhauser1978,tigas2022}.  \looseness=-1

\paragraph{\method{} Approximates \method{}-privileged's Decisions} \method{}-privileged performs the best, as it is better or comparable with all other methods for each graph and data regime (see Table \ref{tab:better_or_comparable_shd_aushd}). 
This strong performance is also visible in Figure \ref{fig:enco_synth} and Figure \ref{fig:enco_real}, where the mass of the method consistently occupies the favorable regions of the SAUSHD metric. These results solidify the perception of \method{}-privileged as a soft upper-bound. 
Importantly, \method{} follows it quite closely: the methods are equivalent in terms of performance in $10$ cases in the low data regime, and in 5 cases in the regular data regime. 
Furthermore, the choices of \method{} and \method{}-privileged correlate  highly (Spearman correlation equal $73$\%), see Appendix~\ref{app:corrs}.
These results provide additional evidence in favor of \method{} soundness and suggest that using data sampled from the model to compute \method{}'s scores does not lead to severe performance deterioration. The training curves and more detailed  
results can be found in Appendix~\ref{app:enco_tables}. \looseness=-1

\subsection{Performance under larger interventional batch size}
\label{sec:large-sample}

ENCO is sensitive to errors in the estimation of properties of interventional data. In particular, small interventional batch size may cause errors in the estimation of conditional likelihood and disrupt the causal discovery process \citep[Appendix B.2.3]{lippe2021efficient}. 
We hypothesize that those estimation errors are an important factor hindering the advantage of using our method over Random in the larger data regime. 
Acquiring data with small batches may result in a misaligned gradient for the model and, in consequence, in the poor assessment of the next interventional target scores. 
\begin{wraptable}{r}{0.5\textwidth}
\scriptsize\centering
\caption{Average AUSHD values (from 5 seeds) for experiments with interventional batch size equal 1024.}\label{tab:large_int_sample}
\scalebox{0.85}{
\begin{tabular}{lllll||l}
\toprule
      &                 AIT &                       CBED &                   Random &                 \method  (ours) &     \method{}-privileged \\
\midrule
bidiag &	22.6 &	17.6 & 20.4 &	\textbf{16.8} & 15.4 \\
chain &	11.4 &	8.2 & 10.2 &	\textbf{8.0} & 7.7 \\
collider &	11.2 &	11.4 & 9.9 &	\textbf{5.0} & 4.8 \\
full &	120.1 &	116.0 & 101.1 &	\textbf{100.9} & 93.2 \\
jungle &	22.3 &	16.7 & 19.9 &	\textbf{11.4} & 10.6 \\
random &	38.4 &	36.2 & 32.4 &	\textbf{29.9} & 28.3 \\

\bottomrule
\end{tabular}}
\end{wraptable}
We perform an additional experiment in a modified regime, where each intervention yields $1024$ data points instead of the previous $32$. Such a regime is relevant in scenarios where setting up an intervention with a new target is costly but obtaining the individual samples is relatively cheap.
We run the experiment on synthetic graphs with $25$ nodes and we run for $25$ acquisition rounds. We present the AUSHD values in Table~\ref{tab:large_int_sample} and full SHD curves in Appendix~\ref{app:large_int_batch}.  
In this setting, \method{} outperforms  all the standard baselines and is on par with \method{}-privileged. Importantly, \method{} reaches the SHD value of $0$ for all graphs. 
Additionally, we found that \method{} selects each intervention target exactly once, except for the \texttt{chain} graph, for which the discovery process converges already after only 15 rounds.

\subsection{Investigating \method{}'s intervention target distributions}
\vspace{-0.1cm}
\label{sec:target_dist}

\begin{figure}[tbp]
    \vspace{-0.25cm}
     \centering
     \includegraphics[width=0.85\textwidth]{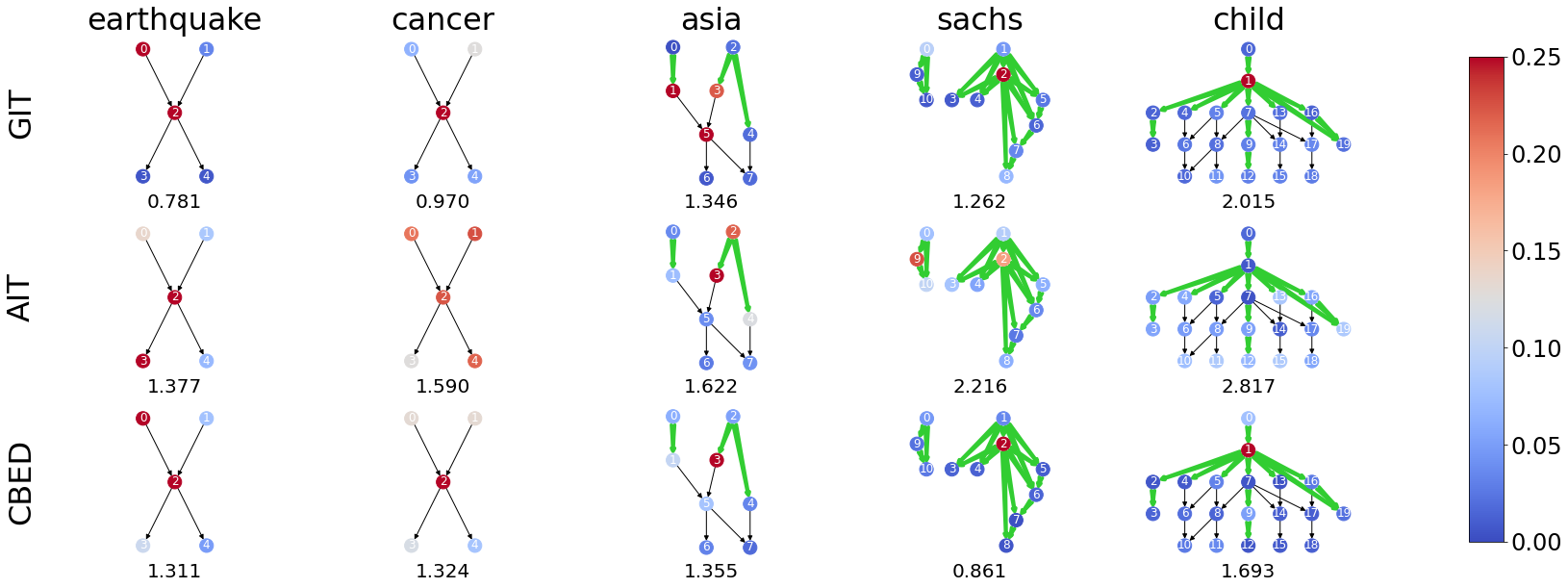}
    \caption{The interventional target distributions obtained by different strategies on real-world data. The probability is represented by the intensity of the node's color. The green color represents the edges for which there exists a graph in the Markov Equivalence Class that has the corresponding connection reversed. The number below each graph denotes the entropy of the distribution.} 
    \label{fig:real_dist}
    \vspace{-0.4cm}
\end{figure}

\begin{figure*}[tb] 
     \centering
     \includegraphics[width=0.85\textwidth]{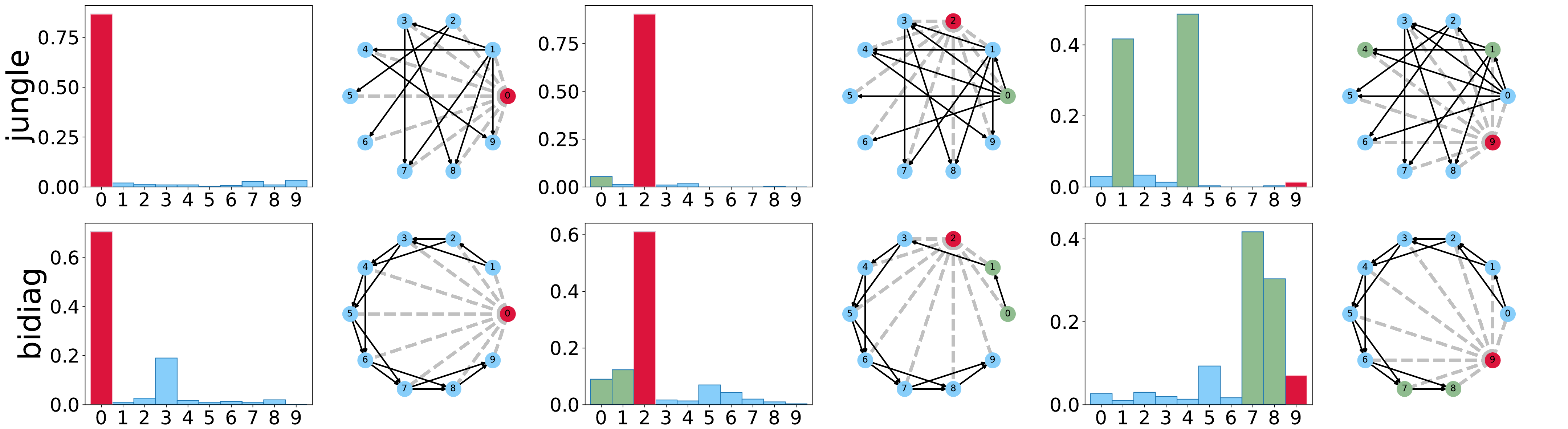}
    \caption{Histograms of intervention targets chosen by \method{}. In this experiment, a node $v$ was chosen (denoted by a red color; $v$'s parents are indicated by green). Parameters were initialized so that the model is only unsure about the neighborhood of $v$.  The solid lines denote known edges and dashed ones are to be discovered.}
    \label{fig:init_synth}
    \vspace{-0.6cm}
\end{figure*}

In order to gain a qualitative understanding of the \method{}'s behavior, we analyze the node distributions generated by respective methods on the BnLearn graphs in Figure~\ref{fig:real_dist}. 
We observe that \method{} often selects nodes with high out-degree, as visible in the \texttt{sachs} and \texttt{child} graphs. Intuitively, interventions on such nodes bring much information, as they affect multiple other nodes. 
In addition, the most frequently selected nodes in the \texttt{sachs}, \texttt{child}, and \texttt{asia} graphs are also 
adjacent to the edges for which there exists a graph in the MEC that has the corresponding connection reversed (as indicated by the green color in Figure~\ref{fig:real_dist}). Note that in general, establishing the directionality of such an edge $(v,w)$ requires performing interventions on nodes~$v,w$ (recall Section~\ref{sec:preliminaries}).~\footnote{For example, in the ENCO framework the directionality parameter $\theta_{ij}$ can only be reliably detected from the data obtained by intervening either on variable $X_j$ or $X_i$~\citep{lippe2021efficient}.} \looseness=-1

We further explore the interventional targets and verify that \method{} is able to target the most uncertain regions of the graph. In the considered setup, we select a node $v$ in the graph. Let $E_v$ be edges adjacent to $v$. We set the structural parameters corresponding to edges $e \notin E_v$ to the ground truth values and initialize in the standard way the parameters for $e \in E_v$. Such a model is only unsure about the connectivity around~$v$, while the rest of the solution is given. We then run the ENCO framework with \method{} and report the intervention target distributions in Figure~\ref{fig:init_synth}.

The interventions concentrate on $v$ (red color) and its parents (green color). This indicates the efficiency of our approach, as these are most relevant to discovering the graph structure. Indeed, to recover the solution, only the parameters for $e \in E_v$ need to be found. Intervening on $v$  changes the distributions of its descendants, providing information on the existence of edges between these variables. \looseness=-1

\section{Limitations and future work}
\begin{itemize}
    \item The theoretical grounding of the method involves multiple assumptions. Further work that simplifies or relaxes the assumptions and identifies fail cases would benefit the community.
    \item We provide proof that epsilon-greedy GIT converges with any causal discovery framework. As for pure GIT, we show its convergence only with the ENCO framework. The development of a more general theory that solidifies the approach is a promising future work direction.
    \item Our method can be applied in the soft-intervention case, and providing appropriate experimental evaluation would be an interesting follow-up to this work.
    \item Our method may need more interventions than the minimal number required to identify the causal structure. For example, GIT can be biased towards high-degree nodes, as interventions on them tend to affect a larger amount of structural parameters and result in larger gradients, which might cause suboptimal choices. 
    \item Intervention acquisition methods (including \method{}) seem to be less effective in a continuous setting. We believe investigating this area would benefit the community.

\end{itemize}

\vspace{-0.1cm}
\section{Conclusions}
\vspace{-0.15cm}
In this paper, we consider the problem of experimental design for causal discovery. We introduce a novel Gradient-based Intervention Targeting (\method{}) method, which leverages the gradients of gradient-based causal discovery objectives to score intervention targets. We demonstrate that the method is particularly effective in the low-data regime, outperforming competitive baselines. We also provide a theoretical justification for the method and perform several analyses, confirming that \method{} typically selects informative targets. \looseness=-1

\vspace{-2mm}
\ack
The work of Piotr Miłoś was supported by the Polish National Science Center grant UMO-2017/26/E/ST6/00622 and UMO-2019/35/O/ST6/03464. The work of Michał Zając was supported by the
Polish National Science Center Grant No. 2021/43/B/ST6/01456. The research of Michał Zając and Aleksandra Nowak has been supported by a flagship project entitled “Artificial Intelligence Computing Center Core Facility” from the DigiWorld Priority Research Area under the Strategic Programme Excellence Initiative at Jagiellonian University. We gratefully acknowledge Polish high-performance computing infrastructure PLGrid (HPC Centers: ACK Cyfronet AGH) for providing computer facilities and support within computational grant no. PLG/2022/015443. Our experiments were managed using \url{https://neptune.ai}. We thank the Neptune team for providing us access to the team version and technical support. We thank Swedish National Supercomputing and the Berzelius Cluster for providing compute resources.

\bibliography{main_neurips}
\bibliographystyle{plainnat}

\newpage

\appendix
\section*{Appendix}



\section{Convergence of causal discovery with \method{}}\label{app:greedy}

Suppose that we have some causal discovery algorithm $\mathcal{A}$ which is guaranteed to converge to the true graph in the limit of infinite data. Here we investigate if such convergence property still holds if we extend $\mathcal{A}$ with \method{}.

Let us define $\eps$-greedy \method{} as follows: every time we need to select an intervention target, we use \method{} with probability $1 - \eps$, and otherwise, we choose randomly uniformly from all available targets.

\begin{prop}\label{prop:convergence}
If the causal discovery algorithm $\mathcal{A}$ is guaranteed to converge given an infinite amount of samples from each possible intervention target, then $\mathcal{A}$ with $\eps$-greedy \method{} is also guaranteed to converge.
\end{prop}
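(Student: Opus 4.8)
The plan is to reduce the statement to a single probabilistic fact: $\eps$-greedy \method{} selects every intervention target infinitely often with probability one. Once this is established, the convergence hypothesis on $\mathcal{A}$ applies verbatim, since the hypothesis already promises convergence whenever each target supplies an unbounded number of samples. The whole point of the $\eps$-greedy modification is that the uniform exploration term provides a positive, history-independent lower bound on the chance of hitting any fixed target, no matter how \method{} distributes its own mass.

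First I would fix the probability space. Let $\mathcal{F}_{t-1}$ denote the $\sigma$-algebra generated by all data and parameters available before round $t$, and for a fixed target $i \in V$ let $A_t^i$ be the event that $i$ is selected in round $t$. The key estimate is the conditional lower bound
\[
\mathbb{P}\!\left(A_t^i \mid \mathcal{F}_{t-1}\right) \ge \frac{\eps}{n},
\]
which holds because with probability $\eps$ the target is drawn uniformly at random independently of the history, and that uniform draw equals $i$ with probability $1/n$; the \method{}-driven branch can only add further mass. This bound is uniform in $t$ and in the realized history, which is exactly what makes the argument robust to the data-dependent (and possibly adversarial) behavior of the \method{} scores.

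Next I would conclude that $i$ is selected infinitely often. Since $\sum_{t \ge 1} \mathbb{P}(A_t^i \mid \mathcal{F}_{t-1}) \ge \sum_{t \ge 1} \eps/n = \infty$ almost surely, the conditional (L\'evy) extension of the second Borel--Cantelli lemma yields $\mathbb{P}(A_t^i \text{ occurs infinitely often}) = 1$. Taking a union bound over the finitely many targets $i \in V$, almost surely every target is selected infinitely often, so the number of interventional samples drawn from each target distribution diverges. Invoking the hypothesis that $\mathcal{A}$ converges to the true graph given infinitely many samples from each possible intervention target then gives convergence of $\mathcal{A}$ combined with $\eps$-greedy \method{}.

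The main obstacle is the probabilistic bookkeeping rather than any deep idea: because the \method{} scores depend on the entire random history of the run, the selection events $A_t^i$ are neither independent nor identically distributed, so the elementary Borel--Cantelli lemma does not apply and one must instead use its conditional martingale form. A secondary point to state carefully is precisely what ``$\mathcal{A}$ converges given infinite samples from each target'' requires — whether the sample counts must grow jointly or only that each count is eventually unbounded — and to verify that the ``infinitely often'' conclusion above supplies exactly the asymptotic regime the hypothesis demands.
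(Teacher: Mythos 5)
Your proposal is correct and follows essentially the same route as the paper: the paper's proof is a single sentence asserting that the $\eps$-exploration guarantees every target is visited infinitely often, whence convergence of $\mathcal{A}$ applies. You merely supply the rigor the paper leaves implicit --- the uniform bound $\mathbb{P}(A_t^i \mid \mathcal{F}_{t-1}) \ge \eps/n$ combined with the conditional (L\'evy) form of the second Borel--Cantelli lemma, which is indeed the right tool given the history-dependence of the \method{} scores.
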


\begin{proof}
Since the $\epsilon$-exploration guarantees visiting every target infinitely many times in the limit, the proof follows from the asserted convergence of $\mathcal A$.
\end{proof}

\begin{rem}

ENCO with $\eps$-greedy \method{} is guaranteed to converge to the true graph under the standard assumptions \citep[Appendix B.1]{lippe2021efficient}.

\end{rem}

\begin{rem}
Proposition \ref{prop:convergence} is asymptotic and holds for arbitrary $\epsilon > 0$. However, in a finite setup, we can choose $\epsilon$ small enough that $\epsilon$-\method{} and \method{} behave similarly. 
Our experiments show that \method{} performs well (compared with other benchmarks) and 
is indistinguishable from an asymptotically convergent method.
\end{rem}

\section{Convergence conditions of ENCO framework with \method{}}
\label{app:git_proof}

\subsection{Preliminaries (ENCO recap)}
In this section, we recall results from \citet{lippe2021efficient} for convergence of their causal discovery method ENCO. They formulate four theorems and a set of conditions that guarantee that the parameters of the structure converge to the true graph. For full proof and detailed explanation please refer to Appendix~B~in~\citet{lippe2021efficient}.
\begin{rem}\label{rem:dir_conditions}
    ENCO identifies common conditions for correct convergence of directionality parameters. They go as follows (see theorems B.1, B.2, and appendix B.4 in ENCO):
\begin{enumerate}
    \item For all possible sets of parents of $X_j$ excluding $X_i$, adding $X_i$ improves the log-likelihood estimate of $X_j$ under the intervention on $X_i$, or leaves it unchanged.
\[
    \forall \widehat{\text{pa}}(X_j)\subseteq X_{-i,j}: 
    \E_{I_{X_i},\bm{X}}\left[\log p(X_j|\widehat{\text{pa}}(X_j),X_i) - \log p(X_j|\widehat{\text{pa}}(X_j))\right] \geq 0
\]
    \item There exists a set of nodes $\widehat{\text{pa}}(X_j)$, for which the probability to be sampled as parents of $X_j$ is greater than 0, and the following condition holds: 
\[
    \exists \widehat{\text{pa}}(X_j)\subseteq X_{-i,j}: 
    \E_{I_{X_i},\bm{X}}\left[\log p(X_j|\widehat{\text{pa}}(X_j),X_i) - \log p(X_j|\widehat{\text{pa}}(X_j))\right] > 0
\]

    \item For all possible sets of parents of $X_i$ excluding $X_j$, adding $X_j$ does not improves the log-likelihood estimate of $X_i$ under the intervention on $X_j$, or leaves it unchanged.
\[
    \forall \widehat{\text{pa}}(X_i)\subseteq X_{-i,j}:
    \E_{I_{X_j},\bm{X}}\left[\log p(X_i|\widehat{\text{pa}}(X_i),X_j) - \log p(X_i|\widehat{\text{pa}}(X_i))\right] \geq 0
\]
For at least one parent set $\widehat{\text{pa}}(X_i)$, which has a probability greater than zero to be sampled, this inequality is strictly smaller than zero.
\end{enumerate}
\end{rem}
\begin{rem}\label{rem:exist_conditions}
    The following condition guarantees convergence of existence parameters (see theorem B.3 in ENCO):
\[
    \min_{\hat{\text{pa}}\subseteq \text{gpa}_i(X_j)} \E_{\hat{I}\sim p_{I_{-j}}(I)}\E_{\tilde{p}_{\hat{I}}(\bm{X})} 
    \big[\log p(X_j|\hat{\text{pa}},X_i) - \log p(X_j|\hat{\text{pa}})\big] > \lambda_{sparse}
\]
    
    where $\text{gpa}_i(X_j)$ is a set of nodes excluding $X_i$ which, according to the ground truth graph, could have an edge to $X_j$ without introducing a cycle,  $p_{I_{-j}}(I)$ refers to the distribution over conducted interventions $p_I(I)$ excluding the intervention on variable $X_{j}$, and $\lambda_{sparse}$ is a positive constant.
\end{rem}

\begin{theorem}\label{th:true_edge_dir_conv}
    \textit{(Theorem B.1 from Appendix B.4 in ENCO.)} Consider the edge $X_i \rightarrow X_j$ in the true causal graph. The orientation parameter $\theta_{ij}$ converges to $\sigma(\theta_{ij}) = 1$ if the conditions from remark~\ref{rem:dir_conditions} are fulfilled.
\end{theorem}

\begin{theorem}\label{th:ancestor_edge_dir_conv}
    \textit{(Theorem B.2 from Appendix B.4 in ENCO.)} Consider a pair of variables $X_i$, $X_j$ for which $X_i$ is an ancestor of $X_j$ without direct edge in the true causal graph. Assume all edges that appear in the true graph have converged according to theorem~\ref{th:true_edge_dir_conv}. The orientation parameter $\theta_{ij}$ converges to $\sigma(\theta_{ij}) = 1$ if the conditions from remark~\ref{rem:dir_conditions} are fulfilled.
\end{theorem}
By Appendix B.4 from ENCO, Theorems~\ref{th:true_edge_dir_conv},~\ref{th:ancestor_edge_dir_conv} hold regardless of whether we collected interventional data from node $X_i$ or $X_j$.

\begin{theorem}\label{th:true_edge_exist_conv}
    Consider an edge $X_i \rightarrow  X_j$ in the true causal graph. The parameter $\gamma_{ij}$ converges to $\sigma(\gamma_{ij}) = 1$ if the condition from remark~\ref{rem:exist_conditions} holds.
\end{theorem}

\begin{theorem}\label{th:false_edge_exist_conv}
    Assume for all edges $X_i \rightarrow  X_j$ in the true causal graph, $\sigma(\theta_{ij})$ and $\sigma(\gamma_{ij})$ have converged to one. Then, the likelihood of all other edges, i.e. $\sigma(\theta_{lk}) \cdot \sigma(\gamma_{lk})$ will converge to zero under the condition that $\lambda_{\textit{sparse}} > 0$.
\end{theorem}

\subsection{\method{}-privileged proof}
\newcommand{\grad}[2]{\lVert \nabla_{\theta_{#1}}L_{\text{graph}}(X_{I_{#2}}) \rVert}

We follow with proof of ENCO convergence with the GIT-privileged acquisition method under the same set of conditions from remarks \ref{rem:dir_conditions}, \ref{rem:exist_conditions}. We show that GIT-privileged collects interventional data as long as is needed for the orientation parameters to converge according to theorems \ref{th:true_edge_dir_conv} and \ref{th:ancestor_edge_dir_conv}. Then theorems \ref{th:true_edge_exist_conv} and \ref{th:false_edge_exist_conv}  can be applied to show that the algorithm reached convergence.

\paragraph{Assumption} We assume that in all local minima of our loss function, the existence parameters take extreme values: $\forall_{i, j}$ $\sigma(\gamma_{ij}) \in \{0, 1\}$, thus, when sufficient time for optimization is given, they stop contributing to the score. Hence in our analysis, we focus on describing only the behavior of orientation parameter gradients.

The proof is structured as follows:
\begin{enumerate}
    \item We show that following \method{}-privileged score allows collecting enough interventional data to direct all edges that appear in the true graph correctly, see proposition~\ref{th:git_priv_1}.
    \item Then we show that, if required, additional interventional data that allows directing other edges according to theorem~\ref{th:ancestor_edge_dir_conv} will be collected, see proposition~\ref{th:git_priv_2}.
    \item Finally, theorems \ref{th:true_edge_exist_conv} and \ref{th:false_edge_exist_conv}  can be applied to show that we learned the correct graph, see proposition~\ref{th:git_priv_3}.
\end{enumerate}

\begin{prop}\label{th:true_edge_conv_always}
    Consider the edge $X_i \rightarrow X_j$ in the true causal graph. The parameter $\gamma_{ij}$ converges to $\sigma(\gamma_{ij}) = 1$ under any set of interventions $p_I(I)$ if
    \[
    \min_{\hat{\text{pa}}\subseteq \mathcal{V}_{-i}} \E_{\hat{I}\sim p_{I_{-j}}(I)}\E_{\tilde{p}_{\hat{I}}(\bm{X})} 
    \big[\log p(X_j|\hat{\text{pa}},X_i) - \log p(X_j|\hat{\text{pa}})\big] > \lambda_{sparse}
    \]
    
    where $\mathcal{V}_{-i}$ is the set of all nodes excluding $X_i$, and $p_{I_{-j}}(I)$ refers to the distribution over conducted interventions $p_I(I)$ excluding the intervention on variable $X_{j}$.
\end{prop}

\begin{proof}
    The condition guarantees that the gradient of $\gamma_{ij}$ is positive.
\end{proof}
\begin{prop}\label{th:true_dir_grad_after_int}
    Consider the edge $X_i \rightarrow X_j$ in the true causal graph. When conditions from remark~\ref{rem:dir_conditions} are fulfilled $\grad{ij}{i}  = \grad{ji}{i} = \grad{ij}{j} = \grad{ji}{j} = 0$ and the edge converged to its true value if and only if we acquired interventional data from $X_i$ or $X_j$.
\end{prop}

\begin{proof}
    First, recall that ENCO does not update orientation parameters unless the interventional data was acquired from a neighboring node. Therefore, the gradient can only be zero before the intervention if the existence parameters converge to zero. This situation is guaranteed not to happen by proposition~\ref{th:true_edge_conv_always}.

    Second, when interventional data is acquired, based on the theorem~\ref{th:true_edge_dir_conv}, we know that the edge converges to its true value.
    
\end{proof}

\begin{prop}\label{th:git_priv_1}
Choosing intervention targets with \method{}-privileged score allows collecting enough data to direct all edges that appear in the true graph properly if the following is true:
\begin{itemize}
    \item For any pair of variables $X_i$, $X_j$ without a direct edge in the true causal graph, when conditions from remark~\ref{rem:dir_conditions} are fulfilled, and we acquired interventional data from either $X_i$ or $X_j$, and sufficient time for the optimization process is given, then the orientation parameters will converge to extreme values $\sigma(\theta_{ij})$, $\sigma(\theta_{ji}) \in \{0, 1\}$ or the existence parameters will converge to $\sigma(\gamma_{ij}) =\sigma(\gamma_{ji}) = 0$.
\end{itemize}

\end{prop}
\begin{proof}
The assumption and proposition~\ref{th:true_dir_grad_after_int} imply that after collecting interventional data from a node, edges that are connected to this node will not contribute to the score anymore. Thus we will not intervene on the same node twice. On the other hand, proposition~\ref{th:true_dir_grad_after_int} guarantees the score of edges that appear in the true graph will be positive until they are directed. 

\end{proof}
\begin{prop}\label{th:git_priv_2}
Consider a pair of variables $X_i$, $X_j$ for which $X_i$ is an ancestor of $X_j$ without a direct edge in the true causal graph. Assume all edges that appear in the true graph has converged according to theorem 6. When conditions from remark~\ref{rem:dir_conditions} are fulfilled, and if $\grad{ij}{i}  = \grad{ji}{i} = \grad{ij}{j} = \grad{ji}{j} = 0$
then the edge converged as described in theorem~\ref{th:ancestor_edge_dir_conv} or its existence parameters converged to 0.
\end{prop}

\begin{proof}
    To zero out the gradient either orientation or existence parameters had to converge.
    If the orientation parameters converged we had to collect interventional data from a neighboring node (because otherwise, ENCO does not update parameters). Thus, based on the theorem~\ref{th:ancestor_edge_dir_conv}, we know that the edge converged to its true value.
\end{proof}

\begin{prop}\label{th:git_priv_3}
    Given sufficient acquisition rounds and time for optimization ENCO with \method{}-privileged intervention acquisition method will recover the true graph.
\end{prop}
\begin{proof}
From propositions~\ref{th:git_priv_1},~\ref{th:git_priv_2} we have that \method{}-privileged will collect interventional data from new nodes until edges that appear in the true graph are correctly directed and edges between pairs of variables $X_i$, $X_j$ without a direct edge in the true causal graph either disappear from the model or are directed according to theorem~\ref{th:ancestor_edge_dir_conv}.
By theorems~\ref{th:true_edge_exist_conv},~\ref{th:false_edge_exist_conv} we conclude that we indeed acquired enough interventional data to converge to the correct graph.
    
\end{proof}

\subsection{\method{} convergence}

\begin{prop}
 Given sufficient acquisition rounds and time for optimization ENCO with \method{} intervention acquisition method will recover the true graph if the following is true:
 \begin{itemize}
     \item For any graph $G$ sampled from the structural belief $\P_{\rho}(\cdot)$ (recall equation~\ref{eq:score_2}) during \method{} score estimation, the theorems and propositions from sections B.1 and B.2 hold when we use $G$ instead of the true graph and compute gradient using data from the sampled model $\P_{G,\funcParams, i}(X)$ (recall equation~\ref{eq:joint_distribution}).
 \end{itemize}
\end{prop}
\begin{proof}
First, note that thanks to proposition~\ref{th:true_edge_conv_always}, if there is an edge $X_i \rightarrow X_j$ in the true graph there exists a model, that can be sampled from the belief with non-zero probability, in which this edge appears.

For each undirected edge $X_i \rightarrow X_j$ in the structural belief for which the existence parameter converged to $\sigma(\theta_{ij}) = 1$, there exists a model (with a positive probability to be sampled), that will yield a gradient of positive magnitude if and only if there is no interventional data acquired from the node connected to it. This model contains the edge $X_i \rightarrow X_j$, thus by the assumption made above and proposition~\ref{th:true_dir_grad_after_int}, it will yield a positive gradient.
In consequence, expectation over all possible models, when the edge is not yet directed, will yield a positive score.

Note that, when interventional data from $X_i$ or $X_j$ is acquired, the edge $X_i \rightarrow X_j$ is directed and it does not yield a gradient of positive magnitude under data sampled from $\P_{G,\funcParams, i}(X)$ for any $G \sim \P_{\rho}(\cdot)$. This stems from the fact that the gradient term zeroes out when parameter $\sigma(\theta_{ij})$ takes extreme values (0 or 1). 

Hence, \method{} score will allow to sequentially "eliminate" undirected edges. Since to update our structural belief $\P_{\rho}(\cdot)$ we use interventional data sampled from the true graph when all edges are directed, we are guaranteed (by B.2 section) that they are directed according to theorems \ref{th:true_edge_dir_conv},~\ref{th:ancestor_edge_dir_conv}. Then the same argument as for \method{}-privileged can be applied to show that we converged to the true graph.
\end{proof}
\section{Details about Employed Causal Discovery Frameworks}\label{app:details_causal_frameworks}

\subsection{ENCO}
\label{app:enco-details}

We extend the description of the ENCO framework \citep{lippe2021efficient} from Section~\ref{sec:gitWithENCO}.

\paragraph{Structural Parameters.} ENCO learns a distribution over the graph structures by associating with each edge $(i,j)$, for which $i \neq j$, a probability $p_{i,j}=\sigma(\gamma_{i,j})\sigma(\theta_{i,j})$. Intuitively, the  $\gamma_{i,j}$ parameter represents the existence of the edge, while $\theta_{i,j} = -\theta_{j,i}$ is associated with the  direction of the edge. The parameters $\gamma_{i,j}$ and $\theta_{i,j}$ are updated in the graph fitting stage. 

\paragraph{Distribution Fitting Stage.} The goal of the distribution fitting stage is to learn the conditional probabilities $P(X_i|PA_{(i,C)})$ for each variable $X_i$ given a graph represented by an adjacency matrix $C$, sampled from  $C_{i,j}\sim~Bernoulli(p_{i,j})$. Note that self-loops are not allowed and thus $p_{i,i}=0$. The conditionals are modeled by neural networks $f_{\phi_i}$ with an input dropout-mask defined by the adjacency matrix. In consequence, the negative log-probability of a variable can be expressed as $L_C(X_i)=-\log{f_{\phi_i}(PA_{(i,C)})(X_i)}$, where  $PA_{(i,C)}$ is obtained by computing $C_{\cdot,i}\odot X$, with $\odot$ denoting the element-wise multiplication. The optimization objective for this stage is defined as minimizing the negative log-likelihood (NLL) of the observational data over the masks $C_{\cdot, i}$. Under the assumption that the distributions satisfy the Markov factorization property defined in Equation~\ref{eq:markov}, the NLL can be expressed as:
\begin{equation}
    L_{D} =  \Ex_{X}\Ex_{C}[\sum_{i=1}^{n}L_C(X_i)].
\end{equation}

\paragraph{Graph Fitting Stage and Implementation of Interventions.} The graph fitting stage updates the structural parameters $\theta$ and $\gamma$  defining the graph distribution. After selecting an intervention target $I$, ENCO samples the data from the postinterventional distribution $\widetilde{P}_I$. In experiments, in the current paper, where the variables are assumed to be categorical the intervention is implemented by changing the target node's conditional to uniform over the set of node's categories. As the loss, ENCO uses the graph strcuture loss $L_{graph}$ defined in Equation \ref{eq:enco_obj} in the main text plus a regularization term  $ \lambda L_{\gamma, \theta}^{sparse}$ that influences the sparsity of the generated adjacency matrices, where $\lambda$ is the regularization strength.

\paragraph{Gradients Estimators.} In order to update the structural parameters $\gamma$ and $\theta$ ENCO uses REINFORCE-inspired gradient estimators. For each parameter $\gamma_{i,j}$ the gradient is defined as:
\begin{equation}
\begin{split}
    \frac{\partial L_G}{\partial \gamma_{i,j}} = \sigma'(\gamma_{i,j})\sigma(\theta_{i,j}) \cdot \\ 
    \cdot \Ex_{\mathbf{X}, C_{-ij}}[L_{X_i \to X_j}(X_j)-L_{X_i \not\to X_j}(X_j)+\lambda],
    \label{eq:enco_gamma}
\end{split}
\end{equation}
where $\Ex_{\mathbf{X}, C_-ij}$ denotes all of the three expectations in Equation~\ref{eq:enco_obj} (in the main text), but excluding the edge $(i,j)$ from $C$. The term $L_{X_i \not\to X_j}(X_j)$ describes the negative log-likelihood of the variable $X_j$ under the adjacency matrix $C_{-ij}$, while $L_{X_i \to X_j}(X_j)$ is the negative log-likelihood computed by including the edge $(i,j)$ in $C_{-ij}$. For parameters $\theta_{i,j}$ the gradient is defined as:
\begin{align}
     \frac{\partial L_G}{\partial \theta_{i,j}}  = \sigma'(\theta_{i,j}) \cdot \nonumber \\ \cdot \big(p(I_{i})\sigma(\gamma_{i,j}) \Ex_{I_i, \mathbf{X}, C_{-ij}}[L_{X_i \to X_j}(X_j)-L_{X_i \not\to X_j}(X_j)] - \nonumber \\
      p(I_{j})\sigma(\gamma_{j,i}) \Ex_{I_j, \mathbf{X}, C_{-ij}}[L_{X_j \to X_i}(X_i)-L_{X_j \not\to X_i}(X_i)]\big),
     \label{eq:enco_theta}
\end{align}
where $p(I_i)$ is the probability of intervening on node $i$ (usually uniform) and $\Ex_{I_i, \mathbf{X}, C_{-ij}}$ is the same expectation as $\Ex_{\mathbf{X}, C_{-ij}}$ but under the intervention on node $i$. 

\subsection{DiBS} 
DiBS~\citep{lorch2021dibs} is a Bayesian structure learning framework which performs posterior inference over graphs with gradient based variational inference. This is achieved by parameterising the belief about the presence of an edge between any two nodes with corresponding learnable node embeddings. This turns the problem of discrete inference over graph structures to inference over node embeddings, which are continuous, thereby opening up the possibility to use gradient based inference techniques. In order to restrict the space of distributions to DAGs, NOTEARS constraint~\citep{NEURIPS2018_e347c514} which enforces acyclicity is introduced as a prior through a Gibbs distribution. 

Formally, for any two nodes $(i,j)$, the belief about the presence of the edge from $i$ to $j$ is paramerised as:
\begin{equation}
    p(g_{ij} \mid u_i, v_j) = \frac{1}{1+\exp(-\alpha(u_i^Tv_j))}
\end{equation}
Here, $g_{ij}$ is the random variable corresponding to the presence of an edge between $i$ to $j$, $\alpha$ is a tunable hyperparameter and $u_i, v_j \in \mathbb{R}^k$ are embeddings corresponding to node $i$ and $j$. The entire set of learnable embeddings, i.e. $\mathbf{U} = \{u_i\}^d_{i=1}$, $\mathbf{V} = \{v_i\}^d_{i=1}$ and $\mathbf{Z} = [\mathbf{U}, \mathbf{V}]\in \mathbb{R}^{2\times d \times k}$ form the latent variables for which posterior inference needs to be performed. Such a posterior can then be used to perform Bayesian model averaging over corresponding posterior over graph structures they induce. 

DiBS uses a variational inference framework and learns the posterior over the latent variables $\mathbf{Z}$ using SVGD~\citep{NIPS2016_b3ba8f1b}. SVGD uses a set of particles for each embedding $u_i$ and $v_j$, which form an empirical approximation of the posterior. These particles are then updated based on the gradient from Evidence Lower Bound (ELBO) of the corresonding variational inference problem, and a term which enforces diversity of the particles using kernels.
The prior over the latent variable $\mathbf{Z}$ is given by a Gibbs distribution with temperature $\beta$ which enforces soft-acyclicty constraint:
\begin{equation}
    p(\mathbf{Z}) \propto \exp(-\beta \E_{p(\mathbf{G}|\mathbf{Z})} \left[h(\mathbf{G})\right]) \prod_{ij}\mathcal{N}(z_{ij};0, \sigma_z^2)
\end{equation}
Here, $h$ is the DAG constraint function given by NOTEARS~\citep{NEURIPS2018_e347c514}.

\section{Details about Intervention Targetting Methods}
\label{app:acquisition}

In this section we briefly introduce other intervention acquisition methods used for comaprison in this work. 

\paragraph{Active Intervention Targeting (AIT)}

Assume that the structural graph distribution maintained by the causal discovery algorithm can be described by some parameters $\strParams$. Consider a set of graphs $\mathcal{G}=\{\mathcal{G}_j\}$ sampled from this distribution. AIT assigns to each possible intervention target $i \in V$ a discrepancy score that is computed by measuring the variance between the graphs ($VBG$) and variance within the graphs ($VWG$). The $VBG_i$ for intervention $i$ is defined as:
\begin{equation}
VBG_i=\sum_{j}\langle \mu_{j,i}-\bar{\mu_{i}}, \mu_{j,i}-\bar{\mu_{i}} \rangle
    \label{eq:VBG},
\end{equation}
where $\mu_{j,i}$ is the mean of all samples drawn from graph $\mathcal{G}_j$ under the intervention on target $i$, and $\mu_{i}$ is the mean of all samples drawn from graphs under intervention on target $i$. The variance within graphs is described~by:
\begin{equation}
VWG_i=\sum_{j}\sum_{k} \langle [S_{j,i}]_k-\mu_{j,i}, [S_{j,i}]_k-\mu_{j,i} \rangle
    \label{eq:VWG},
\end{equation}
where $[S_{j,i}]_k$ is the $k$-th sample from graph $\mathcal{G}_j$ under the intervention on target $i$. The AIT score is then defined as the ratio $D_i = \frac{VBG_i}{VWG_i}$.
The method selects then the intervention attaining the highest score $D_i$. 

\paragraph{CBED Targeting}
Bayesian Optimal Experimental Design for Causal Discovery (BOECD) selects the intervention with the highest information gain obtained about the graph belief after observing the interventional data. Let the tuple $(j,v)$ define the intervention, where $j \in V$ describes the intervention target, and $v$ represents the change in the conditional distribution of variable $X_j$. Specifically, this means that the new conditional distribution of $X_j$ is a distribution with point mass concentrated on $v$. Moreover, let $Y_{(j,v)}$ denote the interventional distribution under the intervention $(j,v)$, and let $\psi$ denote the current belief about the graph structure (i.e. the random variable corresponding to the structural and distributional parameters $\psi=(\strParams,\funcParams)$). BOECD selects the intervention that maximizes~\citep{tigas2022}:
\begin{equation}
    (j^*,v^*) = \argmax_{(j,v)}I(Y_{(j,v)}; \psi\ |\ \mathcal{D}),
\end{equation}
where $\mathcal{D}$ are the observational data. The above formulation necessities the use of an MI estimator. One possible choice is a BALD-inspired estimator~\cite{tigas2022, houlsby2011}:
\begin{equation}
    I(Y_{(j,v)}; \psi\ |\ \mathcal{D}) = H(Y_{(j,v)}\ |\ \mathcal{D}) - H(Y_{(j,v)};\phi\ |\ \mathcal{D}),
\end{equation}
with $H(\cdot;\cdot)$ denoting the cross-entropy. 
Note that this approach allows to select not only most informative target, but also the value of the intervention. 

\section{Additional Experimental Details}
\label{app:exp_details}

\subsection{Synthetic Graphs Details}
\label{app:synth_graphs_details}
The synthetic graph structure is deterministic and is specified by the name of graph (\texttt{chain}, \texttt{collider}, \texttt{jungle}, \texttt{fulldag}), except for \texttt{random}, where the structure is sampled. Following ENCO \citep{lippe2021efficient}, we set the only parameter of sampling procedure, \texttt{edge\_prob}, to $0.3$.

The ground truth conditional distributions of the causal graphs are modeled by randomly initialized MLPs. Additionally, a randomly initialized embedding layer is applied at the input to each MLP that converts categorical values to real vectors. We used the code provided by \citet{lippe2021efficient}. For more detailed explanation, refer to \citet[Appendix C.1.1]{lippe2021efficient}.

\subsection{ENCO Hyperparameters}
For experiments on ENCO framework we used exactly the same parameters as reported by \citet[Appendix C.1.1]{lippe2021efficient}. We provide them in Table~\ref{table:hparams-enco} for the completeness of our report.
\begin{table*}[h!]
    \caption{Hyperparameters used for the ENCO framework.}
    \centering
    \begin{tabular}{ lc }
        \toprule
        parameter                    & value \\
        \midrule
        Sparsity regularizer $\lambda_{sparse}$                &  $\num{4e-3}$   \\
        Distribution model                &  2 layers, hidden size 64, LeakyReLU($\alpha = 0.1$)   \\
        Batch size & 128 \\
        Learning rate - model & $\num{5e-3}$\\
        Weight decay - model & $\num{1e-4}$\\
        Distribution fitting iterations F  & $\num{1000}$\\
        Graph fitting iterations G & $\num100$\\
        Graph samples K & 100 \\
        Epochs & 30 \\
        Learning rate - $\gamma$ & $\num{2e-2}$ \\
        Learning rate - $\theta$ & $\num{1e-1}$\\
        \bottomrule
    \end{tabular}
    \label{table:hparams-enco}
\end{table*}

\subsection{DiBS Hyperparameters}

In Table~\ref{table:hparams-dibs}, we present hyperparameters used for the DiBS framework.

\begin{table*}[h!]
    \caption{Hyperparameters used for the DiBS framework.}
    \centering
    \begin{tabular}{ lc }
        \toprule
        parameter                    & value \\
        \midrule
        Number of particles &  $\num{20}$   \\
        Number of particle updates                &  $\num{20000}$   \\
        Choice of Kernel                & $k([\mathbf{Z}, \Theta],[\mathbf{Z}',\Theta'])= \sigma_\mathbf{Z}\exp(-\frac{1}{h_\mathbf{Z}} ||\mathbf{Z} - \mathbf{Z}'||^2_{F}) + \sigma_\Theta\exp(-\frac{1}{h_\Theta} ||\Theta - \Theta'||^2_{F})$ \\
         $h_\mathbf{Z}$ & \num{5}\\
        $h_\Theta$ & \num{500}\\
        $\sigma_\mathbf{Z}$ &\num{1} \\
        $\sigma_\Theta$ & \num{1} \\
        Optimizer & RMSProp\\
        Learning rate Optimizer &\num{0.005}\\
        \bottomrule
    \end{tabular}
    \label{table:hparams-dibs}
\end{table*}

\subsection{Computational Cost}
We used two hardware settings, one with GPU: a single Nvidia A100, and another one with CPUs: 12 cores of Intel Xeon E5-2697 processor. In our synthetic graph experiments with ENCO on GPU, a single experiment takes on average 4 h to run, with 57 min being used by GIT to make its decisions; the rest is devoted to the underlying causal discovery algorithm (in this case, ENCO). In the case of the CPU setup, an experiment takes on average 126 h, with the GIT part taking up only 6 h. We estimate the project's overall cost to be around 50K GPUh and 2M CPUh.

\section{Additional Experimental Results}

\subsection{Experiments in DiBS Framework}
\label{sec:dibs_results}

\paragraph{Experimental setup}
\label{app:dibs_experiments}
The experimental setup closely follows the one from \cite{tigas2022}. In the experiments, 10 batches of 10 data-points each are acquired. Each batch can contain various intervention targets. The acquisition method chooses intervention targets and values. For some of the methods, the GP-UCB strategy is used to select a value for a given intervention; see \cite{tigas2022} for details. For every method, we run 40 random seeds. We compare the following methods: \begin{itemize}
    \item \textbf{Soft \method{} (ours)}: gradient magnitudes corresponding to different interventions are normalized by the maximum one, then passed to the softmax function (with temperature $1$). Obtained scores are used as probabilities to sample a given intervention in the current batch. GP-UCB is used for value selection.
    \item \textbf{Random (fixed values)}: Intervention targets are chosen uniformly randomly. The intervention value is fixed at $0$.
    \item \textbf{Random (uniform values)}: Intervention targets are chosen uniformly randomly. The intervention value is chosen uniformly randomly from the variable support.
    \item \textbf{Soft AIT}: Intervention targets are chosen from the softmax probabilities of AIT scores \citep{scherrer2021learning}, with the temperature $2$. GP-UCB is used for value selection.
    \item \textbf{Soft CBED}: Intervention targets are chosen from the softmax probabilities of CBED scores \citep{tigas2022}, with the temperature $0.2$. GP-UCB is used for value selection.
\end{itemize}

The results are presented in Figure~\ref{fig:dibs_main}. We can see the performance of Soft \method{} is comparable to that of Random (uniform values) in both considered graph classes. Soft AIT and Soft CBED behave similarly for Erdos-Renyi graphs, while for Scale-Free they seem to bring a small improvement.

\begin{figure*}[h!]
    \centering
    \vspace{-0.1in}
    \includegraphics[width=0.9\textwidth]{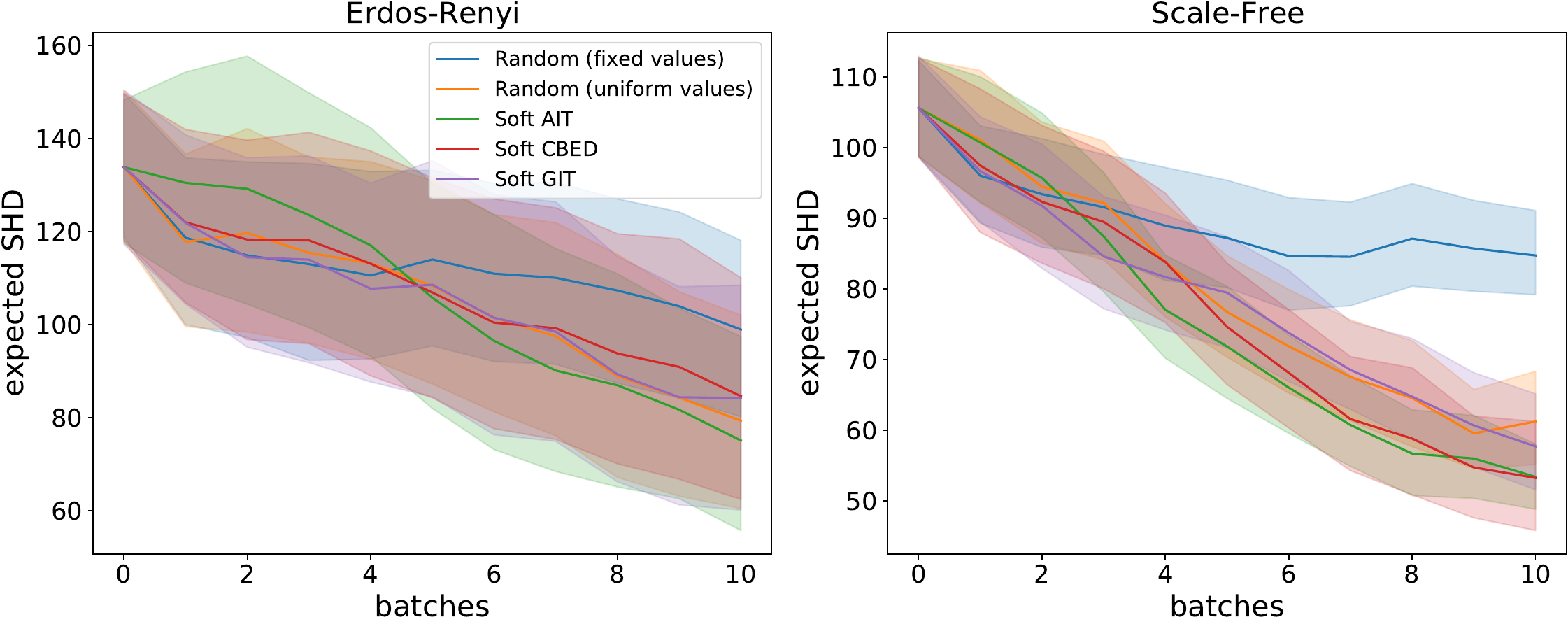}
    \caption{Expected SHD metric for different acquisition methods on top of the DiBS framework, for graphs with 50 nodes and two different graph classes: Erdos-Renyi and Scale-Free. 95\% bootstrap confidence intervals are shown.}
    \vspace{-0.1in}
    \label{fig:dibs_main}
\end{figure*}

\subsection{Performance in ENCO Framework - All Results}

\label{app:enco_tables}

\subsubsection{Ranking Statistics}
\label{app:ranking}
We present ranking statistics in Tables~\ref{tab:app:better_or_equal},~\ref{tab:app:better_or_equal_shd},~\ref{tab:app:pairwise_better_or_equal}.
\begin{table*}[h!]
    \caption{We count the number of training setups (24), where a given method was best or at least comparable to other methods (AIT, CBED, and Random; \method{}-privileged was not compared against), basing on 90\% confidence intervals for AUSHD.  Each entry shows the total count, broken down into two data regimes, $N=1056$ and $N=3200$ resp., presented in the parenthesis. 
    }
    \centering
    \begin{tabular}{llllll}
    \toprule
    {} &        AIT &       CBED &   Random &      \method{} (ours)&     \method{}-privileged\\
    \midrule
    Best                &  0 (0 + 0) &  0 (0 + 0) &    2 (0 + 2) &   8 (4 + 4) &     5 (1 + 4) \\
    Best or comparable &  6 (2 + 4) &  6 (4 + 2) &  12 (5 + 7) &  18 (11 + 7) &  24 (12 + 12) \\
    \bottomrule
    \end{tabular}
    \label{tab:app:better_or_equal}
\end{table*}

\begin{table*}[h!]
    \caption{We count the number of training setups (24), where a given method was best or at least comparable to other methods (AIT, CBED, and Random; \method{}-privileged was not compared against), basing on 90\% confidence intervals for SHD.  Each entry shows the total count, broken down into two data regimes, $N=1056$ and $N=3200$ resp., presented in the parenthesis. }
    \centering
    \begin{tabular}{llllll}
\toprule
{} &         AIT &       CBED &        Random &   GIT (ours) &     priv. GIT \\
\midrule
Best               &   1 (0 + 1) &  1 (0 + 1) &     2 (1 + 1) &    1 (1 + 0) &     3 (1 + 2) \\
Best or comparable &  10 (4 + 6) &  7 (4 + 3) &  22 (12 + 10) &  17 (10 + 7) &  24 (12 + 12) \\
\bottomrule
\end{tabular}
    \label{tab:app:better_or_equal_shd}
\end{table*}

\begin{table*}[h!]
    \caption{For each method we show its pairwise performance against other methods (whether it is better, comparable, or worse) based on 90\% confidence intervals for AUSHD, across two data regimes ($N=1056$ and $N=3200$) and all twelve graphs (hence for each method there are $2\times 12\times 4=96$ pairs to consider). Each entry shows the total count, broken down into two data regimes, $N=1056$ and $N=3200$ resp., presented in the parenthesis. }
    \centering
    \begin{tabular}{lrrr}
    \toprule
    {} &      Better &  Comparable &       Worse \\
    \midrule
    AIT        &     9 (3+6) &  27 (11+16) &  60 (34+26) \\
    CBED       &     9 (7+2) &  35 (20+15) &  52 (21+31) \\
    Random     &  34 (13+21) &  36 (21+15) &  26 (14+12) \\
    \method{} (ours) &  45 (24+21) &  35 (21+14) &   16 (3+13) \\
    \method{}-privileged  &  57 (25+32) &  39 (23+16) &     0 (0+0) \\
    \bottomrule
    \end{tabular}

    \label{tab:app:pairwise_better_or_equal}
\end{table*}

\begin{table*}[h!]\small\centering
\caption{AUSHD with 90\% confidence intervals (in the parenthesis), for synthetic data and for low and regular data regimes ($N=1056$ and $N=3200$ resp.).}\label{app:tab:aushd_synthetic}
\begin{tabular}{lllllll}
\toprule
       &      &                           AIT &                          BALD &                        Random &                    GIT (ours) &                     priv. GIT \\
\midrule
bidiag & 1056 &     24.7 {\tiny (24.1, 25.5)} &     21.9 {\tiny (21.1, 22.8)} &     22.0 {\tiny (21.5, 22.7)} &     20.0 {\tiny (19.5, 20.6)} &     19.9 {\tiny (18.6, 20.9)} \\
       & 3200 &     14.0 {\tiny (13.0, 15.4)} &     13.2 {\tiny (12.5, 14.0)} &     11.1 {\tiny (10.5, 12.1)} &        9.4 {\tiny (9.0, 9.9)} &       9.3 {\tiny (8.0, 10.3)} \\
chain & 1056 &     14.9 {\tiny (14.4, 15.4)} &     12.2 {\tiny (11.8, 12.7)} &     13.5 {\tiny (13.1, 13.9)} &     11.7 {\tiny (11.3, 12.1)} &     12.2 {\tiny (11.4, 13.3)} \\
       & 3200 &        7.7 {\tiny (7.3, 8.1)} &        7.2 {\tiny (6.8, 7.7)} &        6.3 {\tiny (6.0, 6.6)} &        5.6 {\tiny (5.2, 6.0)} &        6.3 {\tiny (5.2, 8.5)} \\
collider & 1056 &     16.0 {\tiny (15.2, 16.7)} &     16.1 {\tiny (15.5, 16.7)} &     14.6 {\tiny (14.1, 15.1)} &     14.4 {\tiny (13.4, 15.2)} &     11.8 {\tiny (10.9, 13.0)} \\
       & 3200 &     10.9 {\tiny (10.2, 11.7)} &     12.2 {\tiny (11.6, 12.7)} &       9.7 {\tiny (9.2, 10.3)} &     12.1 {\tiny (10.9, 13.1)} &        7.8 {\tiny (6.9, 8.8)} \\
fulldag & 1056 &  133.0 {\tiny (131.2, 134.7)} &  141.6 {\tiny (139.1, 144.2)} &  121.7 {\tiny (120.4, 122.9)} &  119.8 {\tiny (118.7, 120.8)} &  120.7 {\tiny (119.1, 122.1)} \\
       & 3200 &     72.8 {\tiny (71.0, 74.5)} &   100.6 {\tiny (97.8, 103.8)} &     63.4 {\tiny (62.0, 64.7)} &     67.9 {\tiny (66.0, 70.3)} &     63.4 {\tiny (61.2, 64.9)} \\
jungle & 1056 &     23.2 {\tiny (21.9, 24.6)} &     20.6 {\tiny (19.6, 21.7)} &     20.9 {\tiny (20.1, 21.7)} &     14.7 {\tiny (14.1, 15.4)} &     13.9 {\tiny (12.4, 15.5)} \\
       & 3200 &     11.2 {\tiny (10.7, 11.9)} &     13.3 {\tiny (12.3, 14.3)} &        9.1 {\tiny (8.8, 9.5)} &        6.9 {\tiny (6.5, 7.2)} &        6.9 {\tiny (5.5, 8.3)} \\
random & 1056 &     42.1 {\tiny (40.5, 43.6)} &     43.1 {\tiny (41.5, 44.9)} &     35.6 {\tiny (34.6, 36.7)} &     34.6 {\tiny (33.7, 35.7)} &     31.9 {\tiny (30.4, 34.6)} \\
       & 3200 &     21.3 {\tiny (20.4, 22.3)} &     30.7 {\tiny (29.0, 32.5)} &     16.5 {\tiny (15.8, 17.3)} &     17.0 {\tiny (16.3, 17.7)} &     14.5 {\tiny (13.6, 15.6)} \\
\bottomrule
\end{tabular}
\end{table*}

\subsubsection{AUSHD Tables}
\label{app:aushd-tables}

We present all AUSHD results with confidence intervals in Tables~\ref{app:tab:aushd_synthetic},~\ref{app:tab:aushd_real_world}.

\begin{table*}[h!]\small\centering
\caption{AUSHD with 90\% confidence intervals (in the parenthesis), for real-world data and for low and regular data regimes ($N=1056$ and $N=3200$ resp.).}\label{app:tab:aushd_real_world}
\begin{tabular}{lllllll}
\toprule
      &      &                        AIT &                       CBED &                     Random &                 GIT (ours) &                  priv. GIT \\
\midrule
alarm & 1056 &  42.8 {\tiny (41.8, 43.8)} &  36.8 {\tiny (35.8, 37.8)} &  39.7 {\tiny (38.6, 40.8)} &  28.8 {\tiny (28.3, 29.3)} &  28.5 {\tiny (27.0, 29.6)} \\
      & 3200 &  35.0 {\tiny (33.6, 36.4)} &  31.6 {\tiny (30.3, 33.1)} &  28.8 {\tiny (27.6, 30.8)} &  24.0 {\tiny (23.4, 24.9)} &  21.5 {\tiny (20.7, 23.1)} \\
asia & 1056 &     3.6 {\tiny (2.9, 4.5)} &     3.5 {\tiny (2.8, 4.3)} &     2.0 {\tiny (1.8, 2.1)} &     2.2 {\tiny (2.0, 2.5)} &     1.8 {\tiny (1.7, 1.9)} \\
      & 3200 &     2.4 {\tiny (1.9, 3.3)} &     2.1 {\tiny (1.9, 2.5)} &     1.3 {\tiny (1.2, 1.4)} &     1.5 {\tiny (1.4, 1.6)} &     1.1 {\tiny (1.0, 1.2)} \\
cancer & 1056 &     2.0 {\tiny (1.9, 2.1)} &     2.1 {\tiny (2.0, 2.3)} &     2.4 {\tiny (2.2, 2.6)} &     2.4 {\tiny (2.2, 2.5)} &     2.1 {\tiny (1.6, 2.6)} \\
      & 3200 &     1.8 {\tiny (1.6, 2.0)} &     2.1 {\tiny (1.9, 2.2)} &     2.2 {\tiny (2.0, 2.3)} &     2.2 {\tiny (2.0, 2.4)} &     2.2 {\tiny (1.7, 2.6)} \\
child & 1056 &  14.4 {\tiny (13.7, 15.2)} &   10.4 {\tiny (9.6, 11.2)} &  11.1 {\tiny (10.7, 11.6)} &     8.3 {\tiny (8.0, 8.7)} &     7.9 {\tiny (7.0, 9.0)} \\
      & 3200 &     7.8 {\tiny (7.1, 8.6)} &     7.1 {\tiny (6.5, 8.0)} &     5.0 {\tiny (4.7, 5.5)} &     4.5 {\tiny (4.2, 4.8)} &     3.9 {\tiny (3.2, 4.7)} \\
earthquake & 1056 &     0.5 {\tiny (0.4, 0.6)} &     0.5 {\tiny (0.4, 0.6)} &     0.4 {\tiny (0.3, 0.5)} &     0.6 {\tiny (0.5, 0.7)} &     0.4 {\tiny (0.2, 0.6)} \\
      & 3200 &     0.2 {\tiny (0.1, 0.3)} &     0.2 {\tiny (0.1, 0.2)} &     0.1 {\tiny (0.1, 0.2)} &     0.3 {\tiny (0.2, 0.5)} &     0.1 {\tiny (0.1, 0.2)} \\
sachs & 1056 &     3.1 {\tiny (2.9, 3.3)} &     2.9 {\tiny (2.6, 3.1)} &     2.9 {\tiny (2.7, 3.1)} &     2.5 {\tiny (2.4, 2.7)} &     2.5 {\tiny (2.2, 2.8)} \\
      & 3200 &     1.4 {\tiny (1.3, 1.6)} &     1.9 {\tiny (1.7, 2.2)} &     1.2 {\tiny (1.1, 1.3)} &     1.1 {\tiny (1.0, 1.3)} &     0.9 {\tiny (0.8, 1.0)} \\
\bottomrule
\end{tabular}
\end{table*}

\subsubsection{SHD Tables}
We present SHD results for small and large data regime with confidence intervals in Tables~\ref{app:tab:shd_synthetic},~\ref{app:tab:shd_real_world}.

\begin{table*}[h!]\small\centering
\caption{SHD with 90\% confidence intervals (in the parenthesis), for synthetic data and for low and regular data regimes ($N=1056$ and $N=3200$ resp.).}

\begin{tabular}{lllllll}
\toprule
       &      &                        AIT &                       CBED &                     Random &                 GIT (ours) &                  GIT-priv. \\
\midrule
bidiag & 1056 &  11.4 {\tiny (10.3, 12.4)} &   10.1 {\tiny (9.2, 11.0)} &     7.8 {\tiny (7.0, 8.5)} &     6.3 {\tiny (5.7, 7.0)} &     7.4 {\tiny (6.2, 8.6)} \\
       & 3200 &     5.2 {\tiny (4.2, 6.3)} &     7.8 {\tiny (6.9, 8.7)} &     2.8 {\tiny (2.3, 3.4)} &     2.4 {\tiny (1.8, 2.9)} &     2.2 {\tiny (0.8, 3.6)} \\
chain & 1056 &     5.6 {\tiny (4.8, 6.4)} &     5.4 {\tiny (4.6, 6.1)} &     4.3 {\tiny (3.8, 4.9)} &     3.6 {\tiny (3.0, 4.2)} &     3.6 {\tiny (2.0, 4.8)} \\
       & 3200 &     3.2 {\tiny (2.6, 3.7)} &     3.9 {\tiny (3.4, 4.3)} &     2.2 {\tiny (1.7, 2.6)} &     1.8 {\tiny (1.3, 2.3)} &     1.8 {\tiny (0.2, 2.6)} \\
collider & 1056 &  11.0 {\tiny (10.1, 11.9)} &  11.8 {\tiny (11.0, 12.7)} &    9.8 {\tiny (9.1, 10.6)} &  13.3 {\tiny (12.2, 14.4)} &    9.8 {\tiny (7.6, 12.0)} \\
       & 3200 &     4.8 {\tiny (3.8, 5.9)} &     7.9 {\tiny (6.8, 8.9)} &     3.7 {\tiny (2.8, 4.6)} &    9.7 {\tiny (7.7, 11.6)} &     3.4 {\tiny (1.4, 5.0)} \\
fulldag & 1056 &  64.4 {\tiny (61.8, 67.0)} &  91.4 {\tiny (86.8, 96.0)} &  52.1 {\tiny (50.0, 54.3)} &  55.8 {\tiny (53.4, 58.0)} &  53.4 {\tiny (49.8, 57.0)} \\
       & 3200 &  32.0 {\tiny (30.0, 33.8)} &  75.4 {\tiny (71.8, 79.0)} &  25.1 {\tiny (22.8, 27.2)} &  27.3 {\tiny (25.1, 29.8)} &  20.8 {\tiny (19.6, 21.8)} \\
jungle & 1056 &   10.4 {\tiny (9.2, 11.6)} &  11.6 {\tiny (10.1, 13.2)} &     5.7 {\tiny (5.0, 6.5)} &     5.1 {\tiny (4.4, 5.8)} &     5.2 {\tiny (3.0, 7.4)} \\
       & 3200 &     3.5 {\tiny (3.1, 3.9)} &     8.3 {\tiny (7.2, 9.4)} &     1.9 {\tiny (1.5, 2.3)} &     2.2 {\tiny (1.8, 2.7)} &     3.0 {\tiny (2.0, 4.0)} \\
random & 1056 &  18.8 {\tiny (17.3, 20.3)} &  27.5 {\tiny (25.6, 29.5)} &  11.3 {\tiny (10.0, 12.5)} &  12.5 {\tiny (11.3, 13.5)} &   11.0 {\tiny (9.2, 13.0)} \\
       & 3200 &     8.3 {\tiny (7.0, 9.4)} &  22.1 {\tiny (19.6, 24.4)} &     5.0 {\tiny (4.3, 5.8)} &     5.3 {\tiny (4.4, 6.1)} &     3.8 {\tiny (2.2, 5.4)} \\
\bottomrule
\end{tabular}
\label{app:tab:shd_synthetic}
\end{table*}

\begin{table*}[h!]\small\centering
\caption{SHD with 90\% confidence intervals (in the parenthesis), for real-world data and for low and regular data regimes ($N=1056$ and $N=3200$ resp.).}
\begin{tabular}{lllllll}
\toprule
      &      &                           AIT &                          CBED &                       Random &                   GIT (ours) &                  priv. GIT \\
\midrule
alarm & 1056 &  35.76 {\tiny (34.04, 37.52)} &  28.44 {\tiny (26.68, 30.16)} &  26.0 {\tiny (24.71, 27.29)} &  19.84 {\tiny (19.0, 20.68)} &  25.0 {\tiny (23.2, 27.0)} \\
      & 3200 &  26.15 {\tiny (24.15, 28.23)} &   24.33 {\tiny (21.67, 27.0)} &  16.0 {\tiny (14.57, 17.14)} &  20.0 {\tiny (18.67, 21.33)} &  15.2 {\tiny (14.6, 15.8)} \\
asia & 1056 &       2.0 {\tiny (1.2, 2.68)} &      1.96 {\tiny (1.44, 2.4)} &     0.96 {\tiny (0.8, 1.12)} &      1.2 {\tiny (1.0, 1.36)} &     1.2 {\tiny (0.8, 1.4)} \\
      & 3200 &     1.56 {\tiny (1.12, 1.92)} &      1.28 {\tiny (1.0, 1.48)} &     0.88 {\tiny (0.79, 1.0)} &    1.12 {\tiny (0.96, 1.24)} &     0.8 {\tiny (0.6, 1.2)} \\
cancer & 1056 &      1.72 {\tiny (1.48, 2.0)} &        2.2 {\tiny (2.0, 2.4)} &    2.28 {\tiny (2.04, 2.48)} &     2.12 {\tiny (1.84, 2.4)} &     2.2 {\tiny (1.8, 2.4)} \\
      & 3200 &        1.8 {\tiny (1.6, 2.0)} &      1.96 {\tiny (1.72, 2.2)} &     1.84 {\tiny (1.6, 2.12)} &     2.0 {\tiny (1.76, 2.24)} &     2.4 {\tiny (2.0, 2.8)} \\
child & 1056 &     7.32 {\tiny (5.92, 8.68)} &     6.36 {\tiny (5.52, 7.16)} &     3.52 {\tiny (2.84, 4.2)} &     3.72 {\tiny (3.2, 4.24)} &     2.8 {\tiny (1.4, 4.0)} \\
      & 3200 &       3.2 {\tiny (2.56, 3.8)} &      4.68 {\tiny (3.8, 5.48)} &     1.04 {\tiny (0.7, 1.35)} &     2.16 {\tiny (1.8, 2.52)} &     1.8 {\tiny (0.4, 3.0)} \\
earthquake & 1056 &       0.12 {\tiny (0.0, 0.2)} &       0.12 {\tiny (0.0, 0.2)} &       0.0 {\tiny (0.0, 0.0)} &    0.24 {\tiny (0.08, 0.36)} &     0.0 {\tiny (0.0, 0.0)} \\
      & 3200 &    0.04 {\tiny (-0.04, 0.08)} &        0.0 {\tiny (0.0, 0.0)} &       0.0 {\tiny (0.0, 0.0)} &     0.2 {\tiny (0.08, 0.32)} &     0.0 {\tiny (0.0, 0.0)} \\
sachs & 1056 &      0.84 {\tiny (0.68, 1.0)} &      1.28 {\tiny (0.96, 1.6)} &       0.6 {\tiny (0.4, 0.8)} &    0.52 {\tiny (0.32, 0.72)} &     0.4 {\tiny (0.0, 0.8)} \\
      & 3200 &     0.48 {\tiny (0.32, 0.64)} &     1.48 {\tiny (1.16, 1.76)} &    0.24 {\tiny (0.08, 0.36)} &    0.48 {\tiny (0.28, 0.68)} &     0.0 {\tiny (0.0, 0.0)} \\
\bottomrule
\end{tabular}
\label{app:tab:shd_real_world}
\end{table*}

\subsubsection{ENCO - Training Curves}
We provide SHD training curves for main experiments in Figures~\ref{fig:enco_shd_synth:apx}~and~\ref{fig:enco_shd_real:apx}. 
\label{app:enco_curves}
\begin{figure*}[h!]
    \centering
    \includegraphics[width=\textwidth]{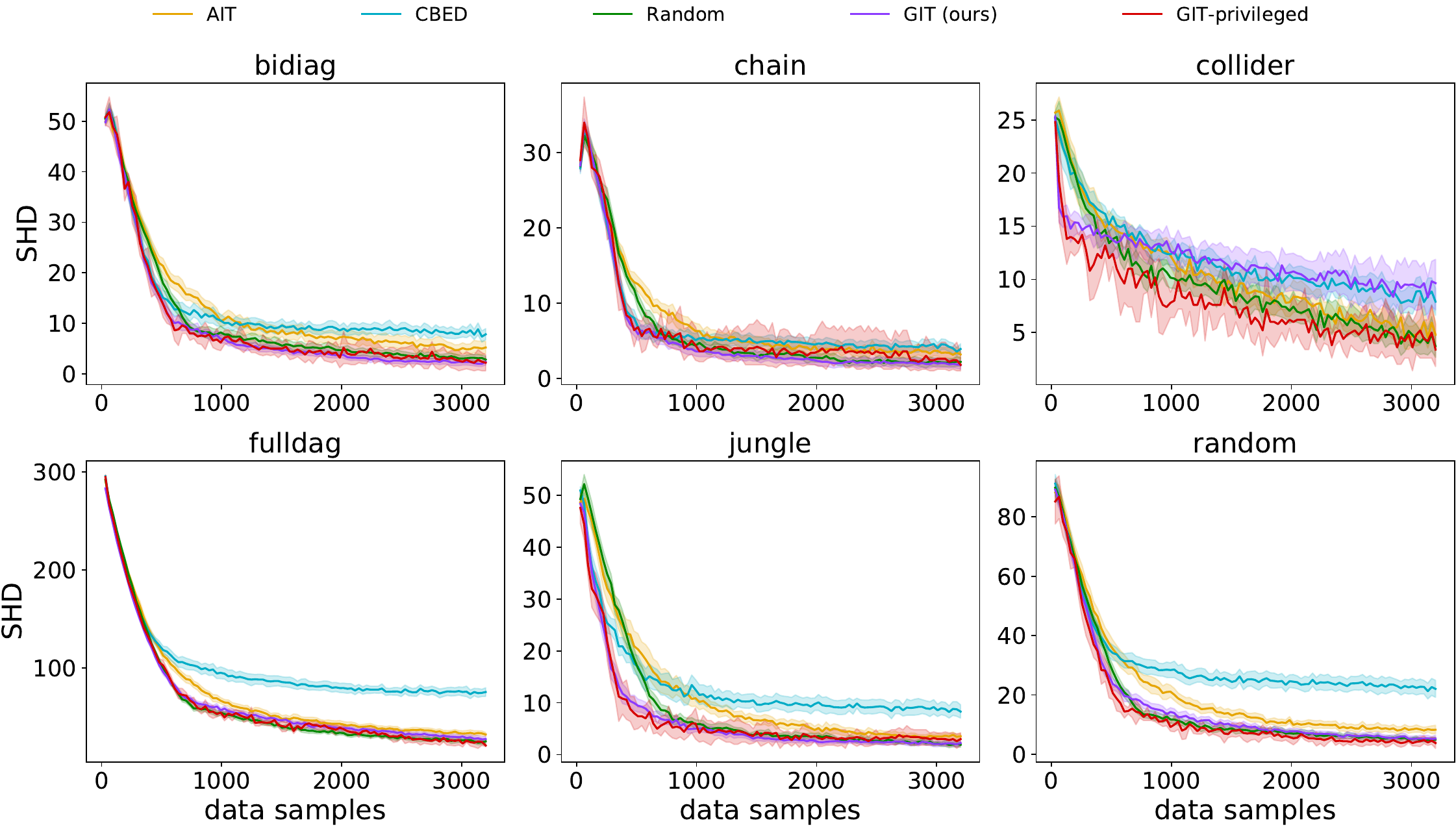}
    \caption{Expected SHD metric for different acquisition methods on top of the ENCO framework, for synthetic graphs with 25 nodes. 95\% bootstrap confidence intervals are shown.}
    \label{fig:enco_shd_synth:apx}
\end{figure*}

\begin{figure*}[h!]
    \centering
    \includegraphics[width=\textwidth]{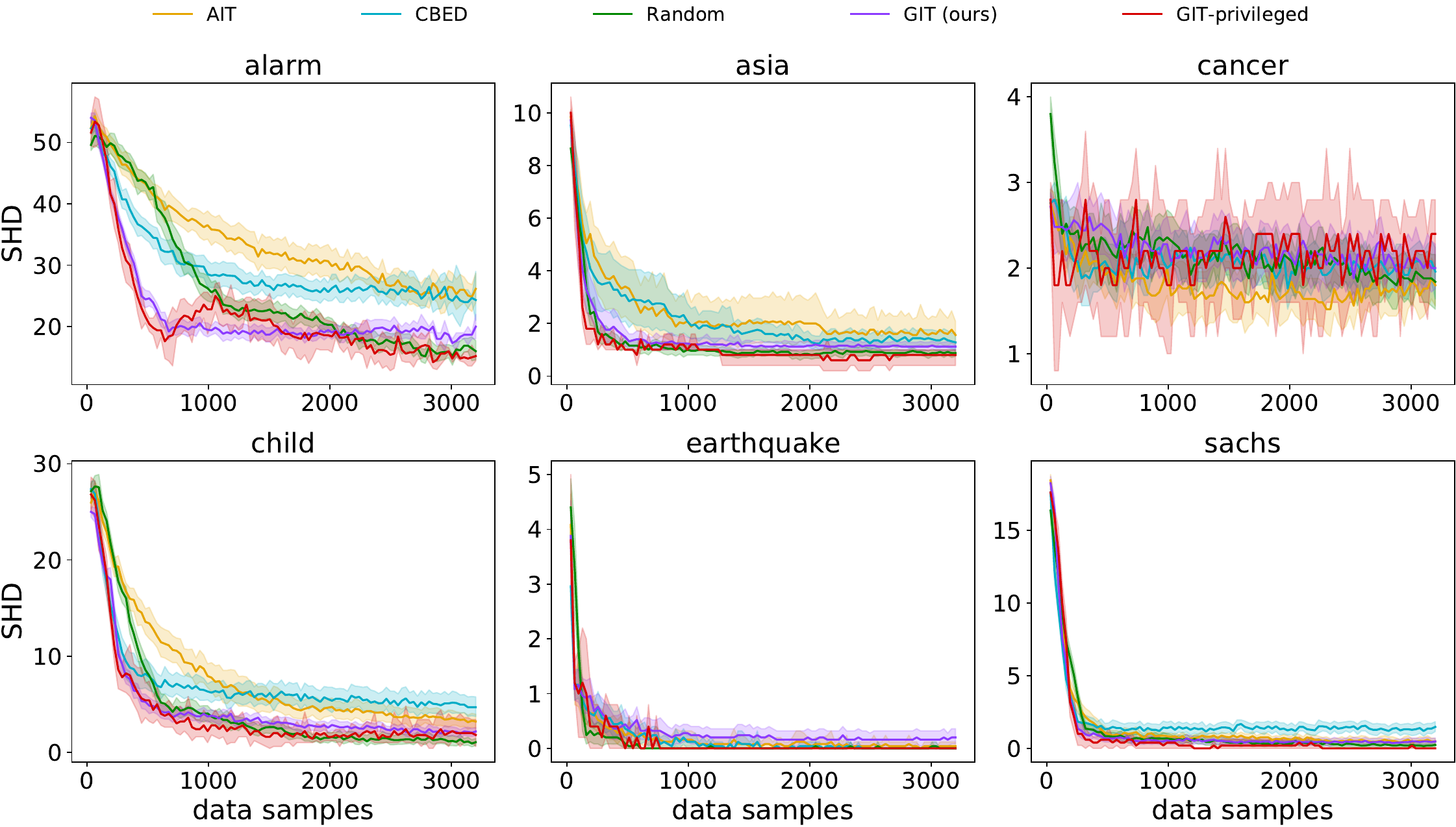}
    \caption{Expected SHD metric for different acquisition methods on top of the ENCO framework, for graphs from BnLearn dataset. 95\% bootstrap confidence intervals are shown.}
    \label{fig:enco_shd_real:apx}
\end{figure*}

\subsection{ENCO - large interventional batch experiment}
\label{app:large_int_batch}
We provide SHD training curves for experiments with the large interventional batch in Figure~\ref{fig:large_int_batch:apx}.
\begin{figure*}[h!]
    \centering
    \includegraphics[width=\textwidth]{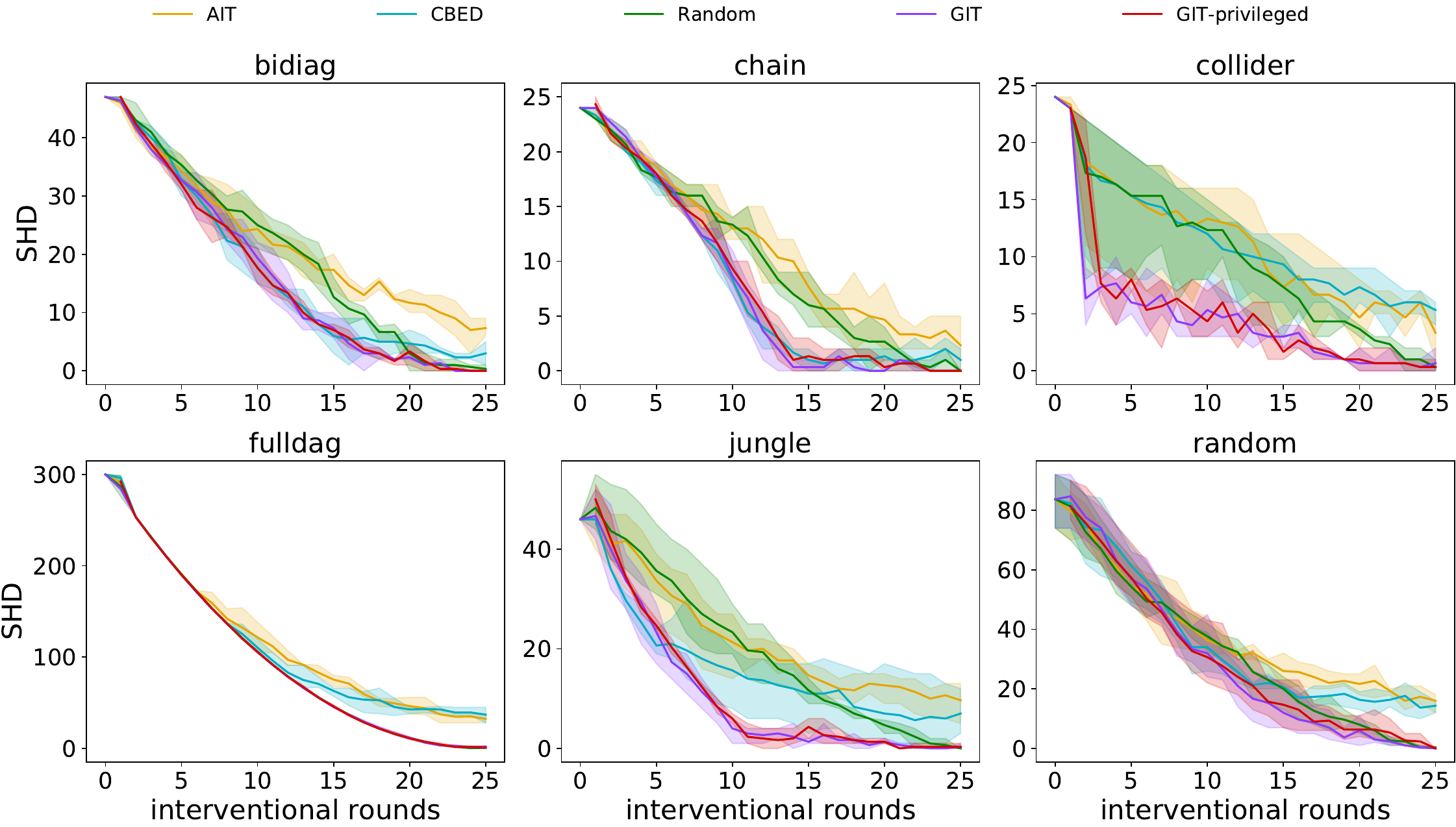}
    \caption{
 Expected SHD metric for GIT with an interventional batch size of 1024 samples. 95\% bootstrap confidence intervals are shown, and results were computed using 3 random seeds.}
    \label{fig:large_int_batch:apx}
\end{figure*}

\subsection{ENCO - monte carlo sampling evaluation}
We provide a performance evaluation of GIT with different amount of graphs sampled from the model in Figure~\ref{fig:enco_cm_sampling:apx}.
\label{app:mc_sampling} 
\begin{figure*}[h!]
    \centering
    \includegraphics[width=\textwidth]{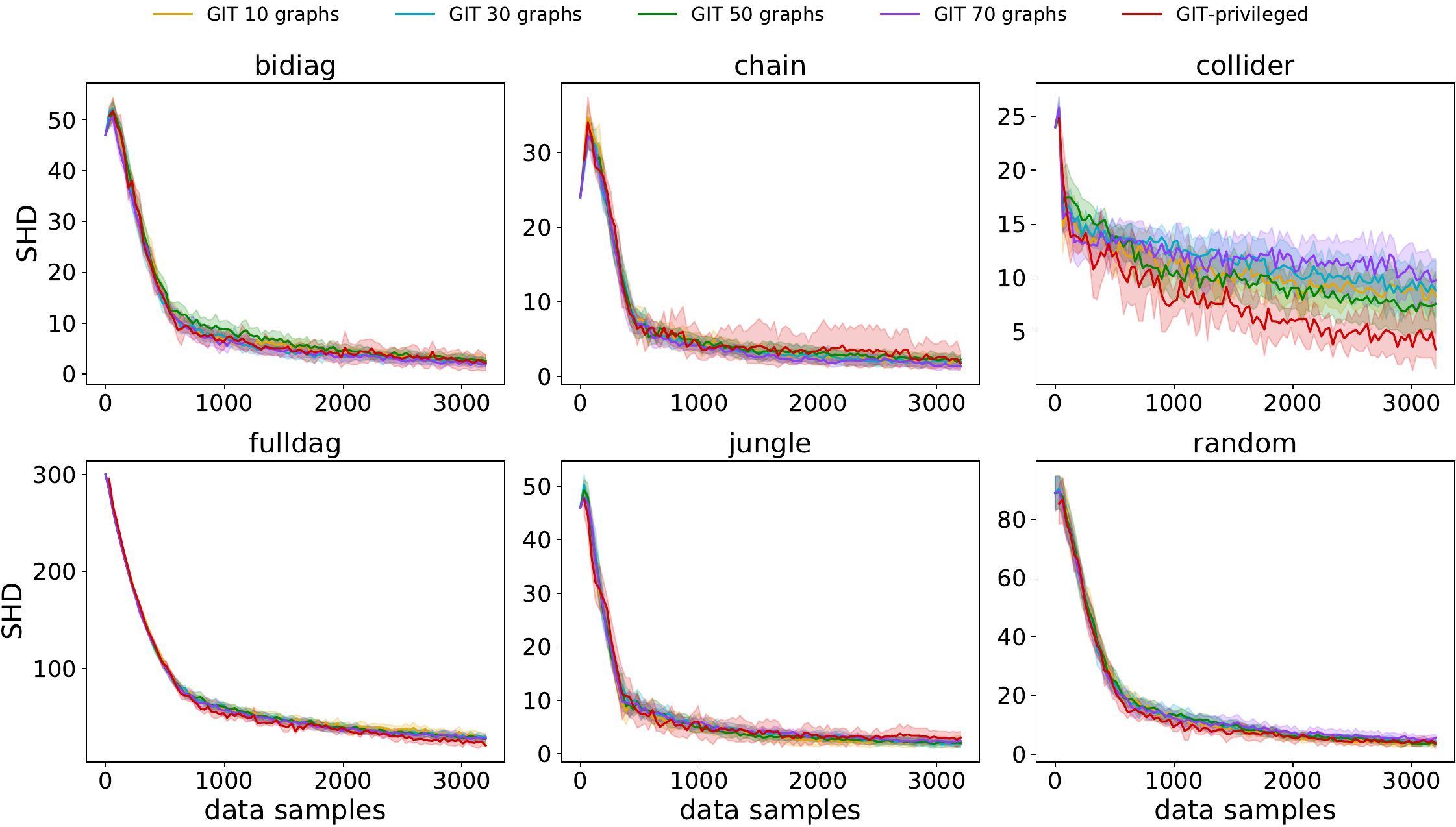}
    \caption{
 Expected SHD metric for GIT with different numbers of graphs samples used to estimate score for interventions (see line~\ref{alg:ITS:MCS} in Algorithm~\ref{alg:ITS}). 95\% bootstrap confidence intervals are shown, results were computed using 10 random seeds.}
    \label{fig:enco_cm_sampling:apx}
\end{figure*}

\subsection{ENCO - Large Synthetic Graphs} \label{app:100_nodes}

In order to study the scalability of our method, we perform an additional evulation on selected synthetic graphs in which we increase the number of nodes to $100$. We comapre the performance of different acquistion methods used with ENCO in Figure~\ref{fig:100_nodes}. We observe that $\method{}$ exhibits very good results, converging to lower SHD values with significantly less acquistion steps compared to all the other methods. This confirms the superiority of $\method{}$, even within a larger graph regime. 

\begin{figure}[h!]
  \begin{center}
    \includegraphics[width=\textwidth]{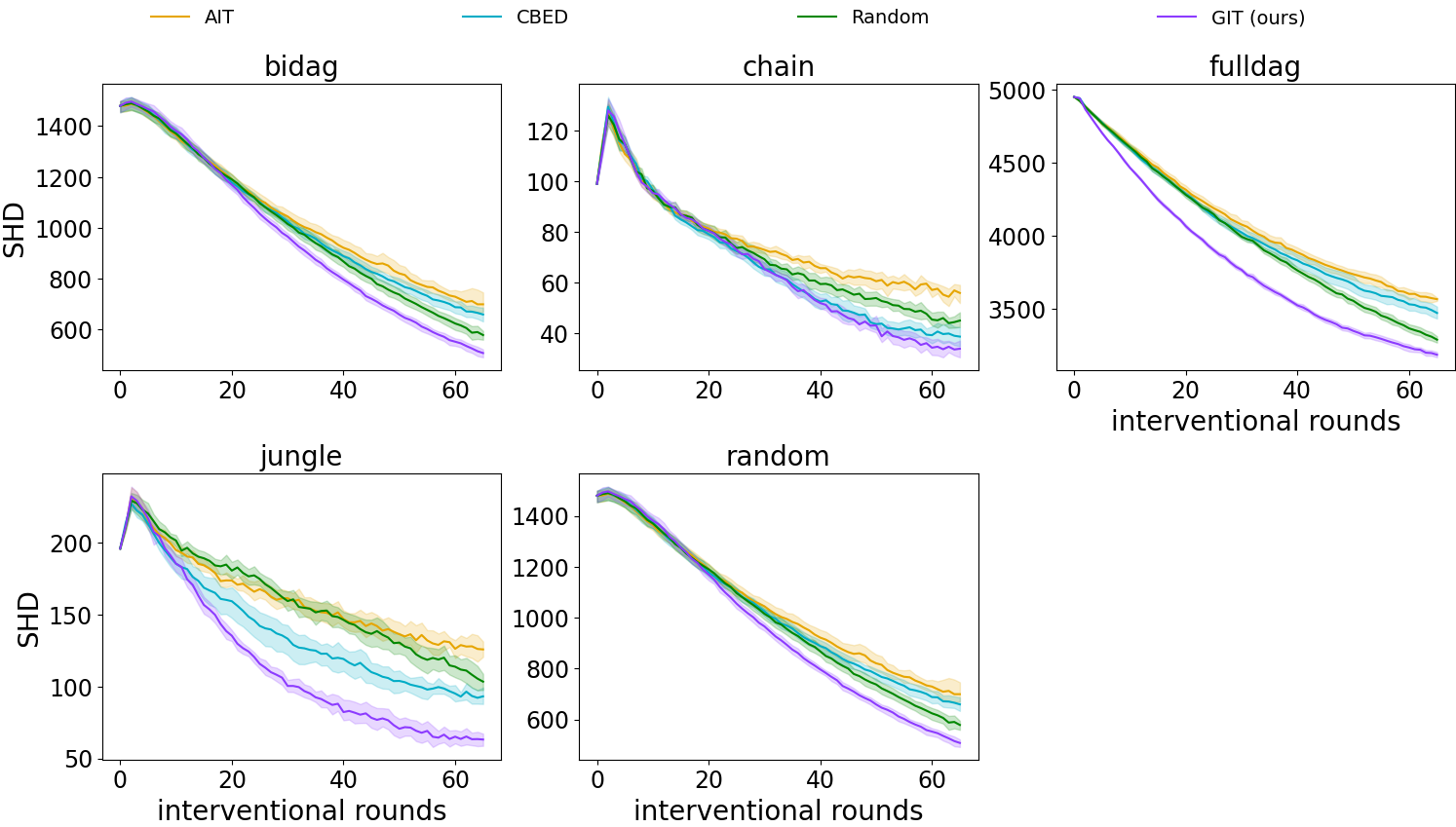}
  \end{center}
  \caption{Expected SHD metric for different acquisition methods on top of the ENCO framework, for synthetic graphs with $100$ nodes. 95\% bootstrap confidence intervals are shown. }
  \label{fig:100_nodes}
\end{figure}

\subsection{ENCO - Correlation Scores} \label{app:corrs}

In Figure~\ref{fig:corrs}, we present the correlation of scores of the tested targeting methods. Importantly, the high correlation of \method{} and \method{}-privileged supports the hypothesis that imaginary gradients are a credible proxy of the true gradients and thus validates \method{}. Otherwise, correlations are relatively small, suggesting that the studied methods use different decision mechanisms. Understanding this phenomenon is an interesting future research direction.

\begin{figure}[h!]
  \begin{center}
    \includegraphics[width=0.45\textwidth]{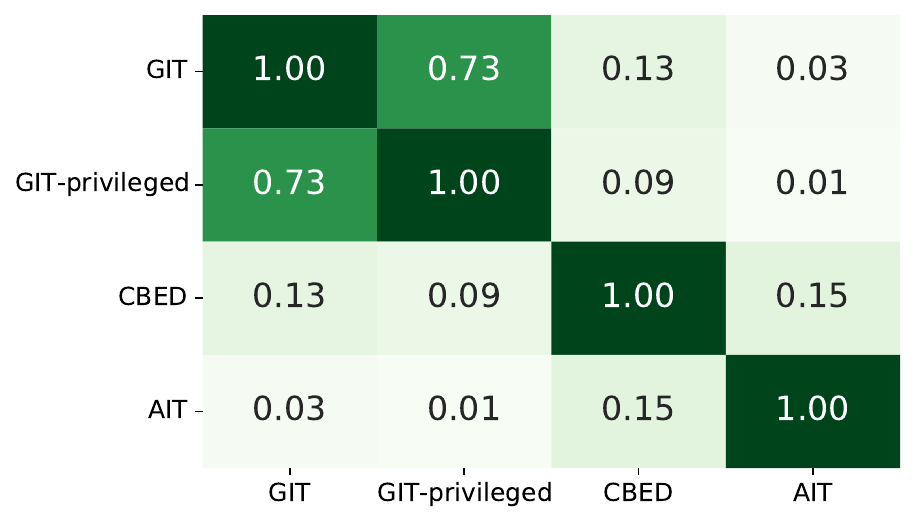}
  \end{center}
  \caption{Spearman's rank correlation of the scores produced by different acquisition methods, averaged over nodes. }
  \label{fig:corrs}
\end{figure}

\begin{figure}[h!]
  \begin{center}
    \includegraphics[width=0.45\textwidth]{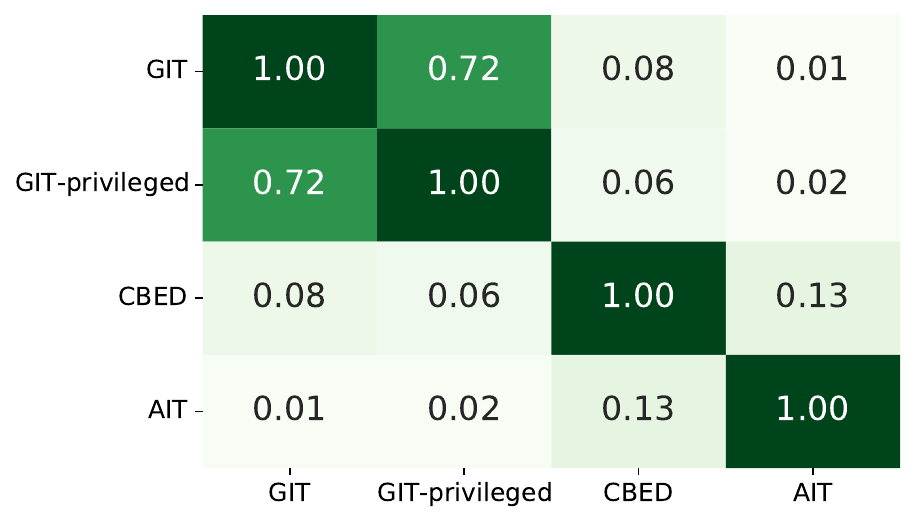}
  \end{center}
  \caption{Pearson correlation of the scores produced by different acquisition methods, averaged over nodes. We can see similar trends as in the case of Spearman's rank correlation, in particular, a high correlation of \method{} and \method{}-privileged.}
  \label{fig:pearson}
\end{figure}

\begin{figure*}[h!]
    \centering
    \includegraphics[width=\textwidth]{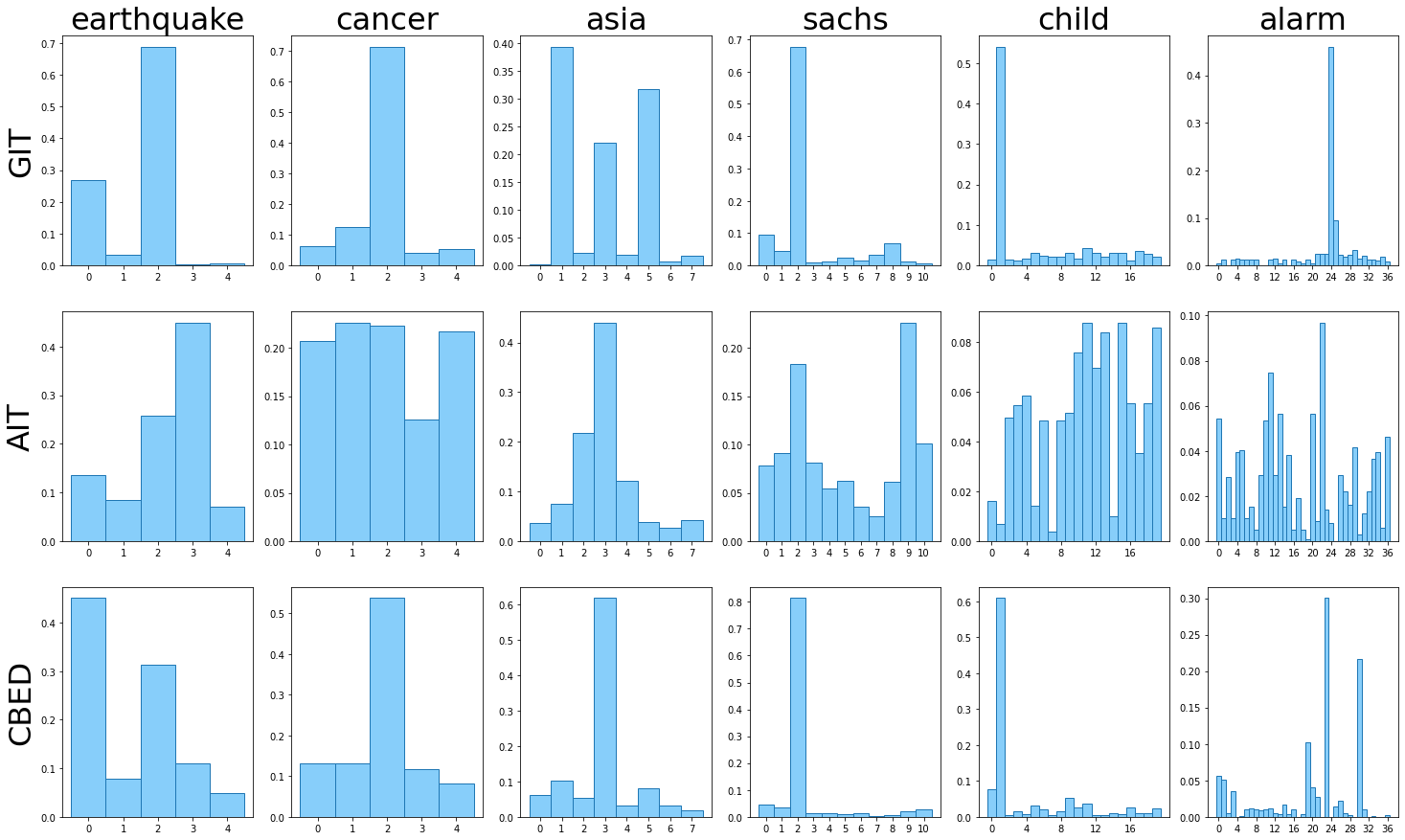}
    \caption{The histograms of chosen interventional targets in all data acquisition steps for different strategies computed on the real-world data. }
    \label{fig:distr:real_graphs_hist}
\end{figure*}

Below, we provide more details about computing the correlations.
Let us denote by $s_{b,i}^m$ the score produced by method $m$ for the batch $b$ and the node $i$. In order to eliminate effects such as changing scores scales during the discovery process, we normalize the scores as $\bar{s}_{b,i}^m := \frac{s_{b,i}^m}{\sum_{j=1}^N s_{b,j}^m}$. For every pair of methods $m$, $m'$ and node $i$, we compute Spearman's rank correlation score $r_s(\bar{s}_{\cdot,i}^m, \bar{s}_{\cdot,i}^{m'})$. We average over the nodes to get the scalar correlation value $\text{corr}(m, m') := \frac{\sum_{j=1}^N r_s(\bar{s}_{\cdot,i}^m, \bar{s}_{\cdot,i}^{m'})}{N}$. 

In addition, we present Pearson's correlations in Figure~\ref{fig:pearson}. Conclusions from the analysis of the Spearman's rank correlation hold; in particular, the correlation between \method{} and \method{}-privileged is high.

\subsection{ENCO - Intervention Targets Distribution}
\label{app:target_dist}

\begin{figure*}[h!]
    \centering
    \includegraphics[width=\textwidth]{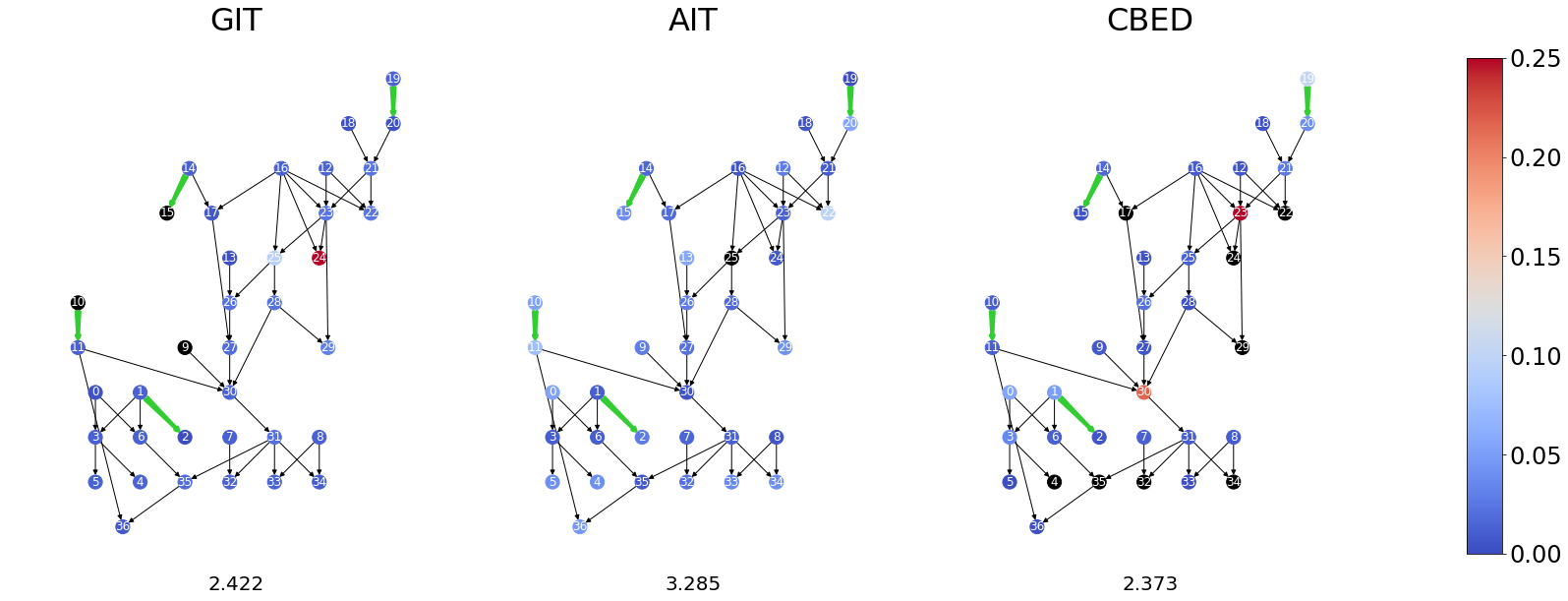}
    \caption{The interventional target distribution for the alarm graph. The green color represents the edges for which there exists a graph in the Markov Equivalence Class that has the corresponding connection reversed. Black color is used to indicate node for each no data is collected. We may observe that each method intervenes on at least one node incident to the critical edges. However, both AIT and CBED do not converge for this dataset and struggle to achieve good results.}
    \label{fig:alarm_apx}
\end{figure*}

In this section, we provide additional histograms and plots with regard to the interventional target distributions obtained by different intervention methods as discussed in Section~\ref{sec:target_dist} in the main text. 

In Figure~\ref{fig:distr:real_graphs_hist}, we present the histograms of the target distributions for the real graphs for each of the intervention acquisition methods. Note that those histograms represent the same information as the node coloring in Figure~\ref{fig:real_dist}. It may be observed that the distributions obtained by \method{} concentrate on fewer nodes than those obtained by the AIT and CBED approaches. The only exceptions being the \texttt{sachs} and \texttt{child} datasets, for which the entropy of CBED approach is smaller (recall Figure~\ref{fig:real_dist}). Note, however, that CBED underperforms on those graphs (recall Figure~\ref{fig:enco_real} in the main text or see Figure~\ref{fig:enco_shd_real:apx}). This is in contrast to \method{}, which maintains good performance.

Finally, in Figure~\ref{fig:alarm_apx}, we present the interventional target distribution on the \texttt{alarm} graph. We observe that each method intervenes on at least one node incident to the critical edges in the Markov Equivalence Class (as indicated by the green color in the plot). However, both AIT and CBED struggle to achieve convergence and suffer low performance, as can be observed in Figure~\ref{fig:enco_shd_real:apx}.

\subsubsection{ENCO - Obtained Synthetic Graphs}

\begin{figure*}[h!]
    \centering
    \includegraphics[width=\textwidth]{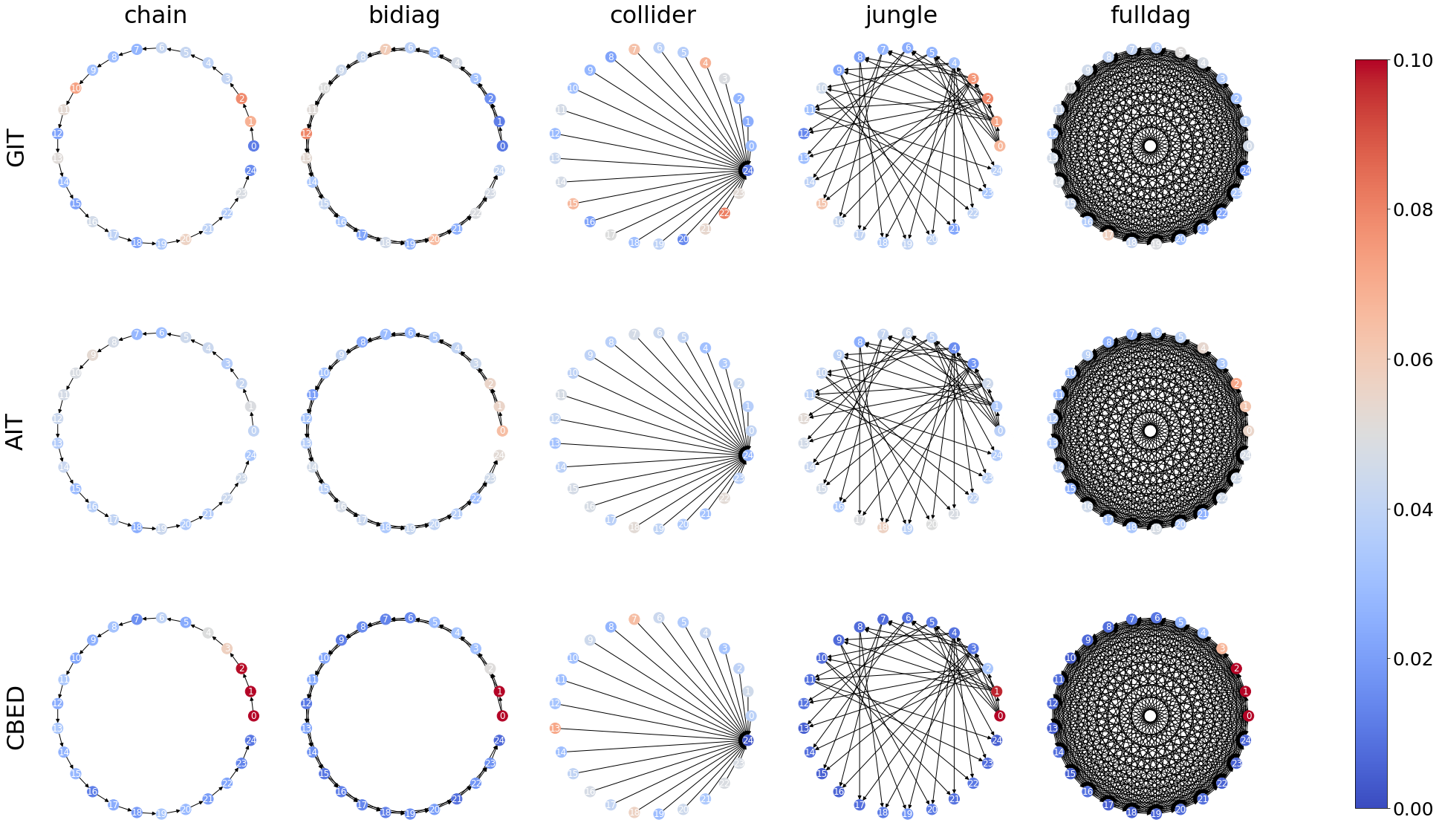}
    \caption{The interventional target distributions obtained by different strategies on synthetic data. The probability is represented by the intensity of the node's color. For clarity of the presentation, we choose not to color the critical edges in the corresponding Markov Equivalence Classes. This is because \textit{all} edges of all the presented graphs would need to be colored. The only exception is the collider graph, which is alone in its Markov Equivalence Class. }
    \label{fig:distr:synth_graphs_plots}
\end{figure*}

In addition, we present the results obtained for the synthetic graphs in Figure~\ref{fig:distr:synth_graphs_plots} and the corresponding histograms in Figure~\ref{fig:distr:synth_graphs_hist}. Note that in this case the results are also averaged by different ground truth distributions, which means that any regularities in selecting the nodes come rather from the graph structure than from data distribution. 

Interestingly, we may observe that for the \texttt{jungle} and \texttt{chain} graphs \method{} often intervenes on the nodes which are the first ones in the topological order (as indicated by low node numbers in the plots). This is again intuitive, as intervening on those nodes can impact more variables lower in the hierarchy. In addition, note that for the chain graph, knowing its Markov Equivalence Class and setting the directionality of an edge automatically makes it possible to determine the directionality of edges for all subsequent nodes in the topological order. Hence intervening on the nodes which are the first ones in the ordering may convey more information and is desired. 

We may also observe that the CBED seems to focus only on the first nodes in the topological order, despite the data distribution, which in some graphs (as the \texttt{chain} graph) may be desired, but in others seems to be an oversimplified solution. Note that CBED often struggles to converge -- this may be observed in Figure~\ref{fig:enco_shd_synth:apx}.

\begin{figure*}[h!]
    \centering
    \includegraphics[width=\textwidth]{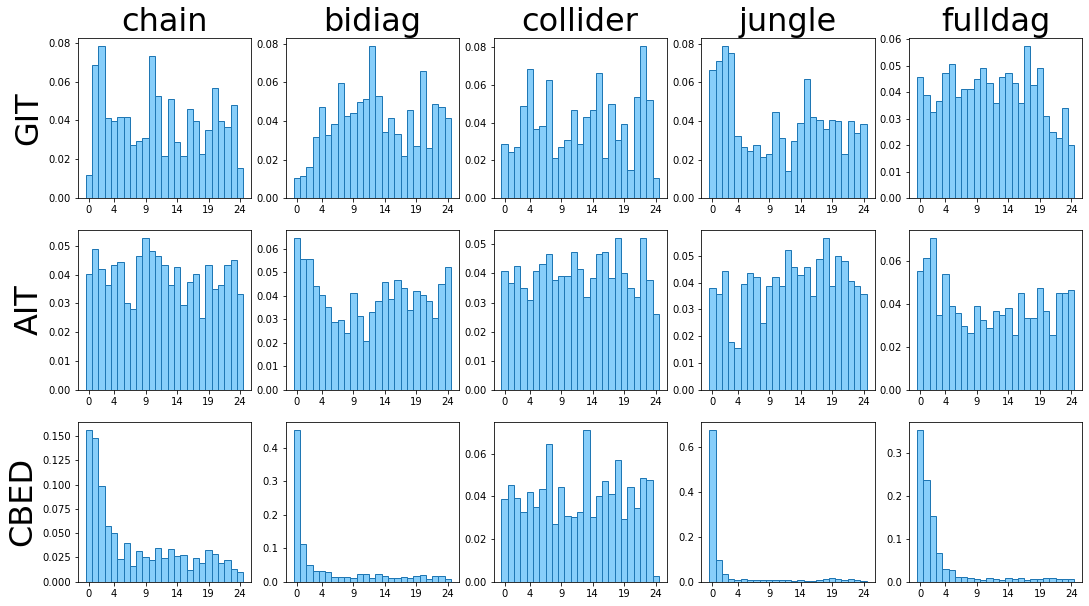}
    \caption{The histograms of chosen interventional targets in all data acquisition steps for different strategies computed on the synthetic data.}
    \label{fig:distr:synth_graphs_hist}
\end{figure*}

\subsubsection{Discussion on Small Real-World Graphs}
\label{app:earthquake_cancer}

We provide a more detailed discussion on the differences between the \texttt{earthquake} and \texttt{cancer} graph distributions and the way it affects the \method{} method. 

Consider Figure~\ref{fig:real_dist} in the main text. Note that the middle node in the \texttt{earthquake} graph corresponds to setting off a burglary alarm, an event very unlikely to happen in observational data but which, when occurs, triggers a change in the distributions of the nodes lower in the hierarchy (see the conditional distributions in Table~\ref{tab:earthquake_apx}). The chance of starting an alarm is also very high in case a burglary has happened (the left-most node in the graph). Hence the \method{} concentrates on those two nodes as they have the largest impact on the entailed distribution.

A similar situation can be observed for the \texttt{cancer} graph, where the middle node corresponds to a binary variable indicating the probability of developing the illness. Even though the two parents of the cancer variable (pollution and smoke, represented by nodes 0 and 1, respectively) share a causal relationship with cancer, their impact on the cancer variable is limited. In other words, the chances of developing cancer, no matter whether being subject to high or low pollution or being a smoker or not, remain rather small (see the conditional distributions for cancer variable in Table~\ref{tab:cancer_apx}). Hence, the only way in which one can gather more information about the impact of having cancer on the distributions of its child variables (nodes 3 and 4) is by performing an intervention. In consequence, it may be observed that \method{} prefers to select nodes that allow to gather data that otherwise would be hard to acquire in the purely observational setting.

\begin{table*}[h!]
    \caption{The conditional distribution in the \texttt{earthquake} graph.}
    \centering
    \begin{tabular}{llll}
       \toprule
         Variable & Parents & Values & Distribution \\
         \midrule
         Burglary & -- & [True, False] & $[0.01, 0.99]$ \\
         \midrule
         Earthquake & -- & [True, False] & $[0.02, 0.99]$ \\  
         \midrule
         Alarm & Burglar=True, Earthquake=True & [True, False] & $[0.95, 0.05]$ \\
         Alarm & Burglar=False, Earthquake=True & [True, False] & $[0.29, 0.71]$ \\
         Alarm & Burglar=True, Earthquake=False & [True, False] & $[0.94, 0.06]$ \\
         Alarm & Burglar=False, Earthquake=False & [True, False] & $[0.001, 0.999]$ \\
         \midrule
         John Calls & Alarm=True & [True, False] & $[0.9,0.1]$ \\
         John Calls & Alarm=False & [True, False] & $[0.05,0.95]$ \\
         \midrule
         Mary Calls & Alarm=True & [True, False] & $[0.7,0.3]$ \\
         Mary Calls & Alarm=False & [True, False] & $[0.01,0.99]$ \\
         \bottomrule
    \end{tabular}
    \label{tab:earthquake_apx}
\end{table*}

\begin{table*}[h!]
    \caption{The conditional distribution in the \texttt{cancer} graph.}
    \centering
    \begin{tabular}{llll}
       \toprule
         Variable & Parents & Values & Distribution \\
         \midrule
         Pollution & -- & [Low, High] & $[0.9, 0.1]$ \\
         \midrule
         Smoker & -- & [True, False] & $[0.3, 0.7]$ \\  
         \midrule
         Cancer & Pollution=Low, Smoker=True & [True, False] & $[0.03, 0.97]$ \\
         Cancer & Pollution=High, Smoker=True & [True, False] & $[0.05, 0.95]$ \\
         Cancer & Pollution=Low, Smoker=False & [True, False] & $[0.001, 0.999]$ \\
         Cancer & Pollution=High, Smoker=False & [True, False] & $[0.02, 0.98]$ \\
         \midrule
         Xray & Cancer=True & [True, False] & $[0.9,0.1]$ \\
         Xray & Cancer=False & [True, False] & $[0.2,0.8]$ \\
         \midrule
         Dyspnoea & Cancer=True & [True, False] & $[0.65,0.35]$ \\
         Dyspnoea & Cancer=False & [True, False] & $[0.3,0.7]$ \\
         \bottomrule
    \end{tabular}
    \label{tab:cancer_apx}
\end{table*}

\subsection{ENCO - Experiments with Pre-Initialization}

\begin{figure*}[h!]
     \centering
     \includegraphics[width=0.9\textwidth]{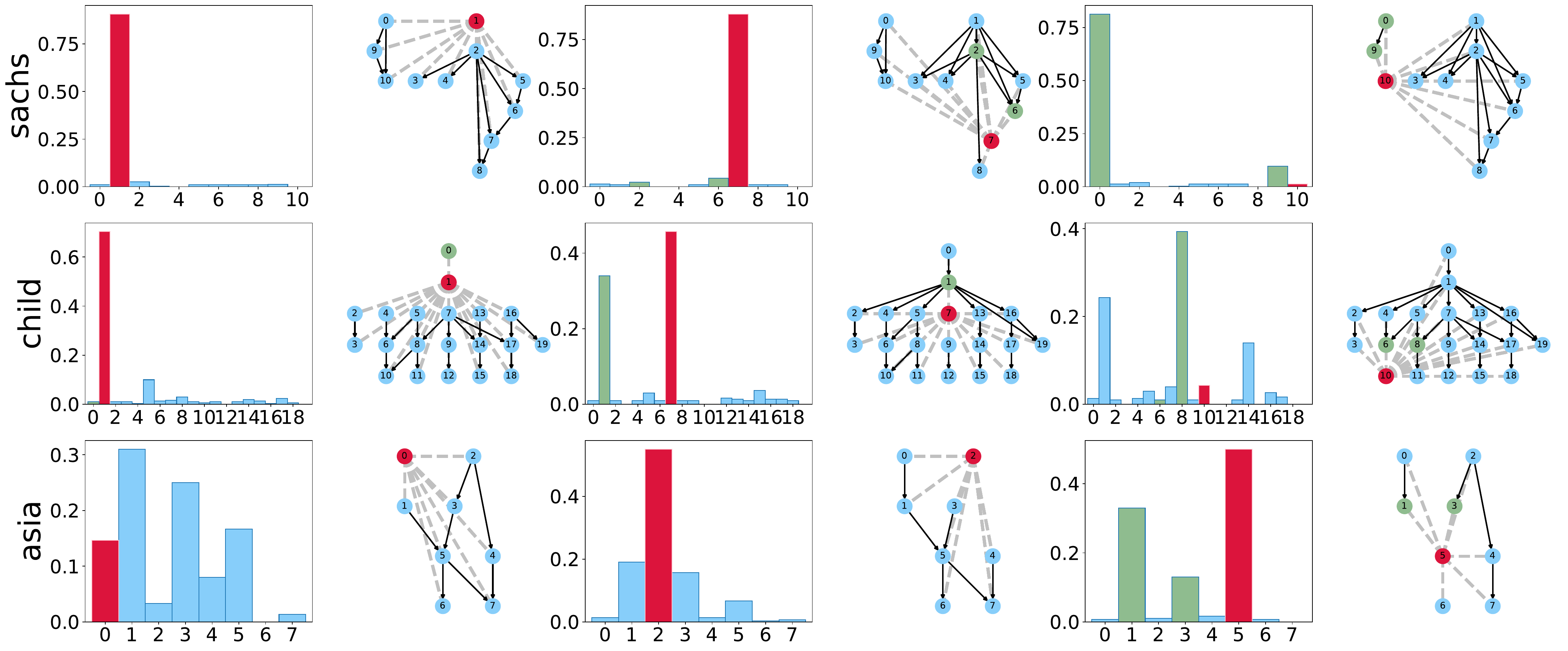}
    \caption{Histograms of intervention targets chosen by \method{}. The red color corresponds to the selected node, while the green color indicates the node’s parents. The edges on which standard initialization was used are indicated by gray dashed lines. The rest of the solution is given in the initialization.}
    \label{fig:init_real:apx}
\end{figure*}

In addition to the discussion on the target distributions in the case of pre-initializing parts of the graph with the ground truth solution (presented in the main text for synthetic graphs in Section~\ref{sec:target_dist}), we present results of the same experiment computed on the real-world graphs. The results are presented in Figure \ref{fig:init_real:apx}.

Similarly as for the synthetic graphs, here we also observe that the \method{} concentrates either on the selected node $v$ or on its parents (denoted respectively by red and green colors in the plots).

\end{document}